\newcommand{\bfx}{{\mathbf{x}}}
\newcommand{\bfX}{{\mathbf{x}}}
\newcommand{\cut}{$\mathcal{U}$-trustworthy}
\newcommand{\tildeY}{{\widetilde{Y}}}
\newcommand{\tildeg}{{\widetilde{g}}}
\newcommand{\tildef}{{\widetilde{f}}}
\newcommand{\hatY}{{\widehat{Y}}}
\newcommand{\hatg}{{\widehat{g}}}
\newcommand{\hatf}{{\widehat{f}}}
\newcommand{\starY}{{Y^{\star}}}
\newcommand{\starg}{{g^{\star}}}
\newcommand{\starf}{{f^{\star}}}
\newtheorem{theorem}{Theorem}
\newtheorem{proposition}{Proposition}
\newtheorem{lemma}{Lemma}
\newtheorem{definition}{Definition}
\newtheorem{example}{Example}
\theoremstyle{definition}
\title{$\mathcal{U}$-Trustworthy Models.\\
Reliability, Competence, and Confidence in Decision-Making}
\author {
    Ritwik Vashistha,\textsuperscript{\rm 1 \footnote{Corresponding Author}}
    Arya Farahi, \textsuperscript{\rm 1}
}
\begin{document}

\maketitle

\begin{abstract}

    With growing concerns regarding bias and discrimination in predictive models, the AI community has increasingly focused on assessing AI system trustworthiness. Conventionally, trustworthy AI literature relies on the probabilistic framework and calibration as prerequisites for trustworthiness. In this work, we depart from this viewpoint by proposing a novel trust framework inspired by the philosophy literature on trust. We present a precise mathematical definition of trustworthiness, termed $\mathcal{U}$-trustworthiness, specifically tailored for a subset of tasks aimed at maximizing a utility function. We argue that a model's $\mathcal{U}$-trustworthiness is contingent upon its ability to maximize Bayes utility within this task subset. Our first set of results challenges the probabilistic framework by demonstrating its potential to favor less trustworthy models and introduce the risk of misleading trustworthiness assessments. Within the context of $\mathcal{U}$-trustworthiness, we prove that properly-ranked models are inherently $\mathcal{U}$-trustworthy. Furthermore, we advocate for the adoption of the AUC metric as the preferred measure of trustworthiness. By offering both theoretical guarantees and experimental validation, AUC enables robust evaluation of trustworthiness, thereby enhancing model selection and hyperparameter tuning to yield more trustworthy outcomes.
\end{abstract}

\section{Introduction}

In recent years, the AI community has expressed growing concerns about bias and discrimination embedded within predictive models. These concerns have prompted a shift in focus from performance to fairness and trustworthiness as a pilar of model evaluation \citep{mehrabi2021survey,eshete2021making}. While fairness aims to mitigate disparate impacts on different population groups, trustworthiness encompasses broader notions of model reliability, robustness, competence, generalization, explainability, transparency, reproducibility, privacy, security, and accountability \citep{serban2021practices,von2021transparency,li2023trustworthy,broderick2023toward}. In this paper, by borrowing from the philosophy literature, we develop a theory of trustworthiness from a lens of reliability and competence and investigate its implications for classification models in the context of decision-making.

Our theoretical framework is rooted in competence-based trust theories, which draws from the philosophical literature on the relation between trust, reliability, and competence \citep{baier1986trust,jones1996trust,ryan2020ai,alvarado2022kind}. Epistemologically, trust is commonly understood as the act of placing confidence in a source of information or in an agent to perform a task based on its perceived competence to be accurate and reliable. In the context of predictive models and decision-making, this notion translates into our reliance on a model that consistently demonstrates competence in achieving its goal in a decision-making task. In competence-based trust theories, trust is described through a three-part relationship ``A trusts B to do X,'' \citep{horsburgh1960ethics,ryan2020ai,von2021transparency,alvarado2022kind,afroogh2023probabilistic} where, in our case, A represents the end-user, B represents the predictive model, and X specifies the delegated task. What counts as good reasons for trusting is a matter of considerable debate, but at the very least, that B is capable or competent to do X is necessary for A to trust B. Since our interest lies in the trustworthiness of B rather than A's trust, we reframe the investigation as a two-part inquiry, ``B is trustworthy to do X.'' Formalizing this inquiry provides a foundation for trustworthiness evaluation and is the primary aim of this work. 

To achieve this goal, this work first constructs a mathematical framework for trustworthy evaluation; to evaluate the claim of whether ``B is trustworthy to do X,'' where B is a predictive model, and X is a subset of decision-making tasks. Establishing trustworthiness requires guaranteeing \emph{reliance}, \emph{competence}, and \emph{confidence}. Reliance reflects the user may rely on the model to achieve its promised goal(s), while competence provides theoretical guarantees that there exists no other model that can achieve a superior result in task X. Confidence is a statistical claim that, given the existing empirical evidence, B is competent.

Traditionally, probabilistic assessment or risk calibration has been considered a crucial aspect of model trustworthiness evaluation, as it ensures that predicted risks align with observed frequencies of outcomes and provides some information regarding the model uncertainty \citep{crowson2016assessing,pleiss2017fairness,hebert2018multicalibration,la2022proportional,afroogh2023probabilistic}. However, its role in determining the trustworthiness of a model, as described above, remains an open question. Drawing upon competence-based trust theories, we challenge this prevailing belief that risk calibration alone is sufficient or necessary to establish trustworthiness \citep{barocas2017fairness,la2022proportional,afroogh2023probabilistic}. To do so, we analyze the limitations of calibration and closely related metrics as a standalone condition for trustworthiness. This work reveals the shortcomings of the probabilistic framework \citep{afroogh2023probabilistic} in capturing the holistic notion of model trustworthiness. Additionally, our results uncover a limitation in the traditional metrics, such as accuracy, that are used for model comparison and hyper-parameter tuning. We find that relying solely on accuracy or related measures can lead to misleading conclusions regarding the trustworthiness of a model.

\section{Related Work}

Trustworthy AI has garnered widespread attention from practitioners, policy-makers, and AI developers alike, solidifying its position as a frontrunner in addressing pressing societal concerns surrounding AI bias and discrimination \citep{li2023trustworthy}. While prediction accuracy is undoubtedly an essential aspect of trustworthiness, factors such as robustness, transparency, reproducibility, replicability, stability, interpretability, and consistency are integral to trustworthiness \citep{broderick2023toward}. These aspects elucidate how an AI system performs under varying conditions, ensures unbiased predictions, and maintains consistent outputs \citep{varshney2019trustworthy,serban2021practices,von2021transparency}.

Calibration, which is closely related to the probability theory of trust \citep{afroogh2023probabilistic}, is claimed to be as one of the key requirements for trustworthy AI and used by many practitioners \citep[e.g.,][]{safavi2020evaluating,tomani2021towards}. By calibrating a classifier, it can be ensured that the predicted probabilities of the classifier more accurately reflect the true likelihood of each outcome, thereby increasing trust in the model's predictions. The literature in trustworthiness has heavily skewed on evaluating calibration of a predictive model \citep{ murphy1977reliability,naeini2015obtaining,kumar2018trainable,widmann2019calibration} and developing post-processing calibration methods for risk management \citep[e.g.,][]{murphy1977reliability,platt1999probabilistic, zadrozny2002transforming, guo2017calibration}.  

However, AI trustworthiness literature extends beyond the probabilistic paradigm. In this context, interpretability is proposed as a gateway to establishing trust \citep{ribeiro2016should}. Moreover, trustworthiness is further explored through the creation of diverse trust scores \citep{jiang2018trust, wong2020much}, as well as learning-based approaches like enhancing the loss function \citep{luo2021learning}. Our work reexamines the foundation of AI trustworthiness and proposes a novel, competence-based trustworthiness paradigm.

\section{Problem Setup}

In our study, we aim to investigate the concept of trustworthiness in the context of ``B (a predictive model) is trustworthy to do X (a decision-making task).''
This work is only concerned with a subset of tasks whose goal is to maximize a class of utility functions, hence $\mathcal{U}$-Trustworthiness. We define a $\mathcal{U}$-trustworthy model as one that possesses a decision boundary (reliability) capable of achieving maximum utility among all possible models (competence) with empirical guarantees (confidence). Next, we formalize this definition. 

Let $\bfX \in \mathcal{X}$, $Y \in \{0, 1\}$, $ \hatY \in \{0, 1\}$ denote the input features, binary outcome, and binary decision respectively; and $\mathcal{D}$ be the distribution generating $(\bfx, Y) \in \mathcal{X} \times \{0, 1\}$ pairs. $f_{\theta}: \mathcal{X} \rightarrow [0, 1]$ is a predictive model that maps inputs from $\mathcal{X}$ to a score used to assign a binary decision. $\theta$ specifies the parameters of the model that will be learned through a learning process. This work is not concerned with parameter estimation, so we assume that the model and its parameters are fixed, thus suppressing the notation $\theta$ from now on. As we will see later, the output of $f$ does not need to be interpreted probabilistically. However, after calibration, it can take probabilistic meaning. When necessary, we assume a finite test sample, denoted with $\mathcal{S}$. Finally, let  $U(\bfx, Y, \hatY): \mathcal{X} \times \{0, 1\} \times \{0, 1\} \rightarrow \mathbb{R}$ be a utility function that quantifies the desirability or usefulness of a decision outcome. Higher utility values indicate more desirable outcomes. 
See the Supplementary Materials for examples and discussion on the distinction between $Y$ and $ \hatY$.

The ultimate goal of decision-making is to assign $\hatY$ based on the output of $f(\bfx)$ and observed covariates $\bfx$ such that it maximizes the expected value of the utility function. The optimal decision rule is determined by solving 
\begin{equation} \label{eq:problem}
    g^{*}(\bfx; U, f) = \arg \max_{\widehat{g} \in \mathcal{G}} \mathbb{E}_{Y, \bfx \sim \mathcal{D}} \left[ U(\bfx, Y, \hatY) \mid  f \right].
\end{equation}
where $\widehat{Y}$ might be dependent on some decision rule $\widehat{g}$. Suppose the solution to \ref{eq:problem} has the form of
\begin{equation} \label{eq:solutoin}
    \hatY =
    \begin{cases}
            1 & \text{if} \ \ f(\bfx) \ge \widehat{g}(\bfx; U) \\
            0 & \text{otherwise}.
    \end{cases}
\end{equation}
This implies that there is a decision rule, denoted with $\widehat{g}(\bfx; U)$, that separates the prediction scores into regions associated with the binary decision assignment. We denote the maximum expected utility with $U^{(m)}$, where $U^{(m)}_{f} = \max_{\widehat{g} \in \mathcal{G}} \mathbb{E}_{Y, \bfx \sim \mathcal{D}} \left[ U(\bfx, Y, \hatY) \mid f \right]$. As we shall see, for this class of problems, we can provide generalizable trustworthiness guarantees. We acknowledge that there is no fundamental reason that the solution to Equation~\eqref{eq:problem} must take the form of Equation~\eqref{eq:solutoin}; on the contrary, we can imagine examples in which the solution to Equation~\eqref{eq:solutoin} cannot be expressed in the form of Equation~\eqref{eq:problem}.

\subsection{$\mathcal{U}$-Trustworthy}

Next, we seek to formalize the proposition ``B is trustworthy to do X'' for the class of tasks defined in Equation~\eqref{eq:problem} with the solution of the form in Equation~\eqref{eq:solutoin}. 

\begin{definition}[$\mathcal{U}$-Trustworthy]
    A model, denoted with $\widetilde{f}(.)$, is \cut\ if $U_{f}^{(m)} \le U^{(m)}_{\tildef} \ \ \forall \ U \in \mathcal{U}$ and $\forall \ f \in \mathcal{F}$. 
\end{definition} 
In simpler terms, for a class of utility functions $\mathcal{U}$, a \cut\ model is one that can be relied on to achieve the highest possible expected utility. For all $U \in \mathcal{U}$, there exists a decision boundary of a \cut\ model effectively separates the input space into regions that lead to the most desirable outcome. 

\paragraph{Reliance.} The reliance condition is met when a solution to Equation~\eqref{eq:problem} exists. This implies that model $\tildef(.)$ can be relied on to accomplish the intended goal, setting the foundation for competency, which represents the maximum achievable utility across all possible models.

\paragraph{Competency.} The competency condition is met when $U_{f}^{(m)} \leq U^{(m)}_{\tildef}$ holds for all $U \in \mathcal{U}$ and $f \in \mathcal{F}$. This ensures that no other model can attain a higher expected maximum utility than what model $\tildef(.)$ achieves.

\paragraph{Confidence.} By subjecting the claim $U_{f}^{(m)} \leq U^{(m)}_{\tildef}$ to hypothesis testing using the test data in $\mathcal{S}$, users can establish statistical confidence in the trustworthiness of the model.

\subsection{Bayes Classifier}

\begin{definition}
    Let $f^{\star}(\bfx)$ denotes the Bayes classifier, implying $f^{\star}(\bfx) = P(Y = 1 \mid \bfx)$; and $\starY$ and $\starg(\bfx; U)$ be the optimal Bayes decision and decision rule associated with the utility function $U$.
\end{definition}

\begin{proposition}
Let $f^{\star}(\bfx)$ be the Bayes classifier, then $U^{(m)}_{f} \le U^{(m)}_{\starf} \ \ \forall \ U \in \mathcal{U}$ and $\forall \ f \in \mathcal{F}$.    
\end{proposition}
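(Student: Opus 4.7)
The plan is to reduce the problem to a pointwise optimization. For any model $f$ and any decision rule $\widehat{g} \in \mathcal{G}$, by the tower property we can write
\begin{equation*}
    \mathbb{E}_{Y,\bfx \sim \mathcal{D}}\!\left[U(\bfx, Y, \hatY) \mid f\right]
    \;=\; \mathbb{E}_{\bfx}\!\left[\, \mathbb{E}_{Y \mid \bfx}\!\left[U(\bfx, Y, \hatY)\right]\,\right],
\end{equation*}
where $\hatY \in \{0,1\}$ is the decision induced by $f(\bfx)$ and $\widehat{g}(\bfx; U)$ through Equation~\eqref{eq:solutoin}. Since the outer expectation is non-negative measure-weighted, maximizing the overall expected utility is bounded above by the integral of the pointwise maxima, namely
\begin{equation*}
    U^{(m)}_{f} \;\le\; \mathbb{E}_{\bfx}\!\left[\, \max_{y \in \{0,1\}} \mathbb{E}_{Y \mid \bfx}\!\left[U(\bfx, Y, y)\right]\,\right].
\end{equation*}

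Next I would observe that the inner conditional expectation is
\begin{equation*}
    \mathbb{E}_{Y \mid \bfx}\!\left[U(\bfx, Y, y)\right]
    \;=\; P(Y=1 \mid \bfx)\, U(\bfx, 1, y) + \bigl(1 - P(Y=1 \mid \bfx)\bigr) U(\bfx, 0, y),
\end{equation*}
so the pointwise optimal $y^{\star}(\bfx)$ depends on the data distribution only through $P(Y=1\mid\bfx) = \starf(\bfx)$. Thus the upper bound above is, by construction, achievable by any model that has access to $\starf(\bfx)$ together with a decision rule that is rich enough to realize the optimal binary choice at each $\bfx$.

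The final step is to exhibit a decision rule $\starg(\bfx; U) \in \mathcal{G}$ that, combined with the Bayes classifier $\starf$, attains this pointwise upper bound. Since $\starf(\bfx) \in [0,1]$, for each $\bfx$ we can choose $\starg(\bfx; U)$ to be below $\starf(\bfx)$ whenever $y^{\star}(\bfx)=1$ and strictly above $\starf(\bfx)$ otherwise; this realizes the pointwise maximizer via Equation~\eqref{eq:solutoin}. Therefore $U^{(m)}_{\starf}$ meets the upper bound, giving $U^{(m)}_{f} \le U^{(m)}_{\starf}$ for every $f \in \mathcal{F}$ and every $U \in \mathcal{U}$.

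\paragraph{Main obstacle.}
The only delicate point is the implicit assumption that $\mathcal{G}$ is expressive enough to let the Bayes classifier attain the pointwise optimum; without this the inequality is vacuous or fails. I expect this to be handled either by taking $\mathcal{G}$ to be the class of all measurable maps $\mathcal{X} \to \mathbb{R}$ (which is consistent with the framework of Equation~\eqref{eq:problem}), or by arguing that any such restriction applies equally to the competitor $f$, so that the pointwise-upper-bound argument still yields the desired inequality once one restricts the maximization in $U^{(m)}_{\starf}$ to the same class.
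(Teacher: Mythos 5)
Your proof is correct and is essentially the argument the paper intends: the paper gives no explicit proof of this proposition (it states only that it ``arises from the definition of the Bayes classifier''), but its proof of the cost-sensitive decision-rule lemma carries out exactly your pointwise conditional-expectation decomposition for that special case, and your argument is the natural generalization. Your ``main obstacle'' is also the right thing to worry about, since the paper never pins down $\mathcal{G}$: if $\widehat{g}(\bfx;U)$ may be an arbitrary measurable function of $\bfx$, then every $f$ can realize every binary decision rule and the comparison degenerates, while if $\mathcal{G}$ is too small the Bayes classifier need not attain the pointwise bound. Your second fallback is the correct resolution --- the bound $U^{(m)}_{f}\le \mathbb{E}_{\bfx}\bigl[\max_{y}\mathbb{E}_{Y\mid\bfx}[U(\bfx,Y,y)]\bigr]$ holds for every $f$ and every $\widehat{g}\in\mathcal{G}$, so it suffices that the pointwise-optimal rule for $\starf$ lies in $\mathcal{G}$, which it does for the utility classes the paper actually treats (a constant threshold for the cost-sensitive family, group-wise constants for the equity-aware family).
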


This statement arises from the definition of the Bayes classifier. This proposition implies that the utility of the Bayes classifier equals that of a \cut\ classifier. Consequently, we arrive at the following theorem:

\begin{theorem} \label{theorem:main_ut}
    Model $\widetilde{f}(.)$ is \cut, if $U_{\starf}^{(m)} = U^{(m)}_{\tildef} \ \ \forall \ U \in \mathcal{U}$ and $\forall \ f \in \mathcal{F}$.
\end{theorem}

\noindent Although this theorem may seem a straightforward consequence of our definitions, it carries two significant implications. It implies that the maximum utility of a \cut\ model aligns with that of the Bayes classifier. Additionally, it streamlines the process of hypothesis testing when the null hypothesis is $U_{\starf}^{(m)} = U^{(m)}_{\tildef}$.
For the rest of this work, we adopt the notation of Table~\ref{tab:notation}.

\begin{table}[h]
  \centering
  \caption{Notation.} \label{tab:notation}
  {\scriptsize
  \begin{tabularx}{0.48\textwidth}{|X|c|c|c|c|}
    \toprule 
    Type & Classifier & Decision Rule & Assignment & Max Utility\\
    \midrule
    
    Bayes & $\starf(\bfx)$ & $\starg(\bfx)$ & $\starY$ & $U^{(m)}_{\starf}$ \\
    \midrule
    $\mathcal{U}$-trustworthy & $\tildef(\bfx)$ & $\tildeg(\bfx)$ & $\tildeY$ & $U^{(m)}_{\tildef}$  \\
    \midrule
    Generic & $f(\bfx)$ & $\hatg(\bfx)$ & $\hatY$ & $U^{(m)}_{f}$  \\
    \bottomrule 
  \end{tabularx}
  }
  \label{tab:notation}
\end{table}

\section{Limitations of Calibration Requirements in $\mathcal{U}$-Trustworthiness Assessment} 

Calibration, which is closely related to the probability theory of trust \citep{afroogh2023probabilistic}, is claimed to be as one of the key requirements for trustworthy AI and used by many practitioners \citep[e.g.,][]{safavi2020evaluating,tomani2021towards}. In this section, we revisit this claim.
\begin{definition}
    Model $f(\bfx)$ is calibrated if $P(Y = 1 \mid f(\bfx)=\alpha) = \alpha$ for all $\alpha \in [0, 1]$.
\end{definition}
We generated 400 data sets each with a sample size of 15,000 and evaluated the performance of three classifiers: (1) the Bayes classifier (blue), (2) a properly-ranked classifier (green), and (3) a calibrated adversarial classifier (red). See Supplementary Materials for the full description of the simulation. 

\paragraph{Results.} Properties of these classifiers are illustrated in Figure~\ref{fig:limitations}. The top-left present the calibration plot for these classifiers. The Bayes, by definition, and calibrated, by construction, classifiers are calibrated; while the properly-ranked classifier is miscalibrated. At first glance, this might lead to the conclusion that the calibrated and Bayes classifiers are more trustworthy for decision-making tasks, which could be reinforced by considering popular metrics like mean calibration error, Brier score \citep{brier1950verification}, and accuracy.  

\begin{figure*}[ht]
    \centering 
    \includegraphics[width=0.32\textwidth]{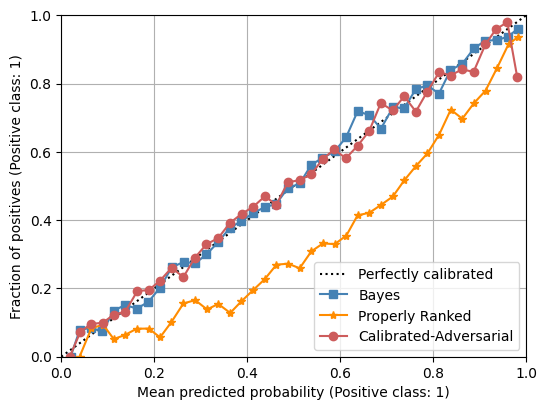}
    \includegraphics[width=0.32\textwidth]{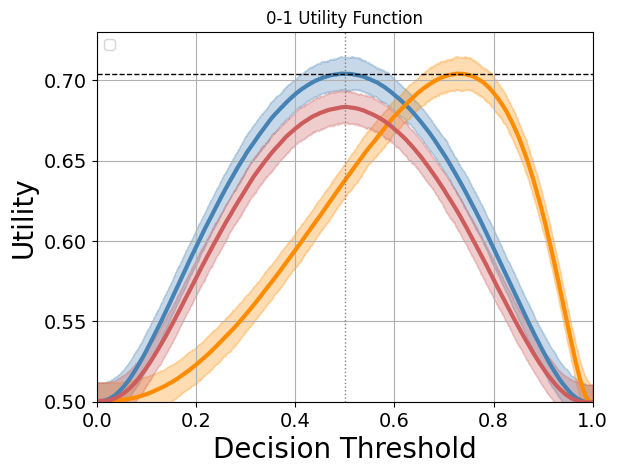}
    \includegraphics[width=0.32\textwidth]{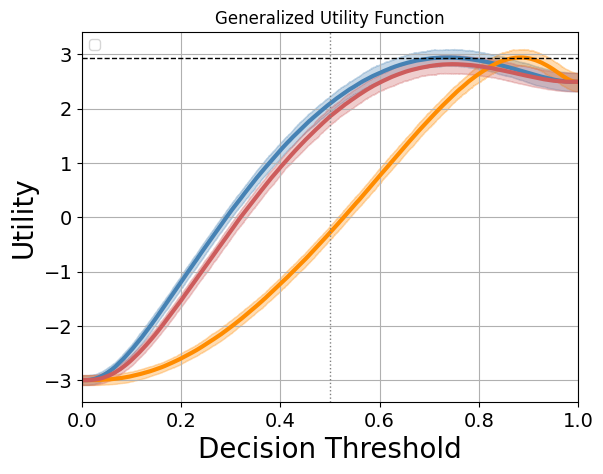} \\    
    \includegraphics[width=0.32\textwidth]{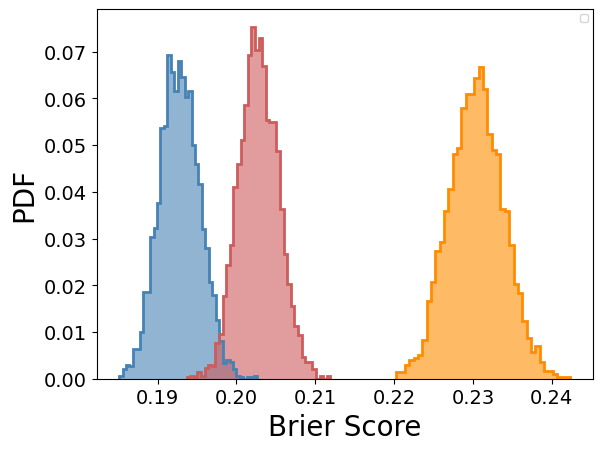}
    \includegraphics[width=0.32\textwidth]{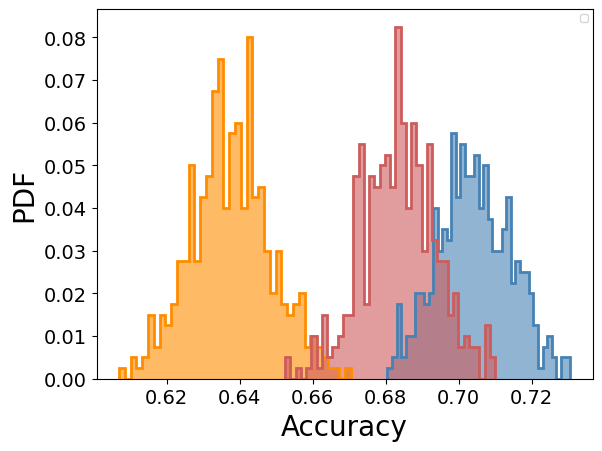}
    \includegraphics[width=0.32\textwidth]{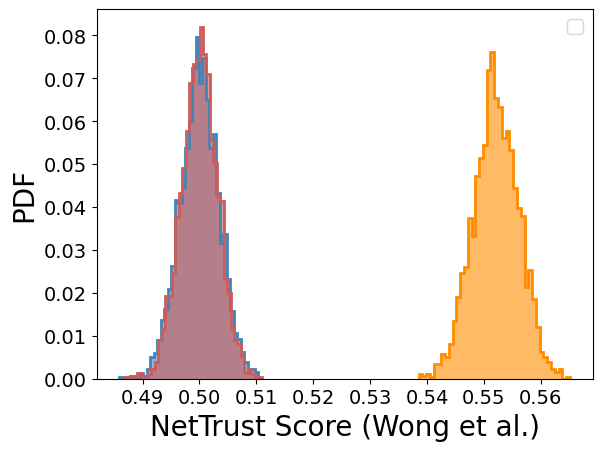} \vspace{-2mm}
    \caption{Performance of three models under different criteria. 
    \textbf{Top row}: Calibration plot (left). Utility comparison using 0-1 loss (middle) as a function of decision threshold. Utility comparison using a general utility function as a function of decision threshold (right). The confidence bands represent the 16th and 84th percentiles of 400 data realizations. \textbf{Bottom row}: Distribution of Brier score (left), Accuracy(middle), NetTrustScore (right) under 400 data realizations.}
    \label{fig:limitations}
\end{figure*}

Upon closer examination of the utility curve (top row), we observe that both the Bayes and properly-ranked classifiers have similar maximum utilities, while the calibrated classifier consistently exhibits lower maximum utility. This suggests that the properly-ranked classifier may be a \cut\ model, while the calibrated classifier falls short. This example motivates us to reevaluate the significance of calibration in trustworthy evaluation and to study the characteristics of \cut\ models more closely.

We also examine NetTrust score \citep{wong2020much}. NetTrust score ranks the Properly ranked classifier higher than calibrated and Bayes classifier, which can lead to a misleading conclusion that the Bayes classifier is suboptimal.

\paragraph{Recapitulation.} This example highlights two key findings:

\begin{itemize}
    \item The notion that a calibrated classifier is inherently trustworthy is challenged, as we demonstrate that a calibrated classifier can actually be incompetent, hence untrustworthy. Conversely, a mis-calibrated classifier can exhibit trustworthy behavior. This counterintuitive result emphasizes the need to reconsider the traditional assumption that calibration is a precursor to trustworthiness.
    \item The limitations of existing performance measures become evident in the context of trustworthy evaluation. We reveal that these measures fall short in accurately assessing the which classifier is trustworthy. This highlights the importance of reassessing evaluation metrics in capturing the nuanced aspects of trustworthiness.
\end{itemize}

\section{Characteristics of $\mathcal{U}$-trustworthy Classifiers}

\begin{definition}[Properly-Ranked Classifier]\label{bayes-equiv}
Let a classifier $f_{\rm PR}(.)$ be a properly-ranked classifier if
\[
        \forall \bfx_1, \bfx_2 \in \mathcal{X}
    \begin{cases}
            \starf(\bfx_1) > \starf(\bfx_2) \Rightarrow f_{\rm PR}(\bfx_1) > f_{\rm PR}(\bfx_2) \\
            \starf(\bfx_1) = \starf(\bfx_2) \Rightarrow f_{\rm PR}(\bfx_1) = f_{\rm PR}(\bfx_2)
    \end{cases}.
\]
\end{definition}

\begin{theorem}[$\mathcal{U}$-Competency Theorem] \label{thorem:bayes_equivalent}
Suppose for the utility class $\mathcal{U}$ with a solution of the form Equation~\eqref{eq:solutoin}. Any properly-ranked classifier is a \cut\ classifier with respect to sample $\mathcal{S}$ with $|\mathcal{S}| < \infty$.
\end{theorem}

This theorem has two important implications. First, if the solution to the general problem of Equation~\eqref{eq:problem} that is associated with the utility class $\mathcal{U}$ can be expressed with the form in Equation~\eqref{eq:solutoin}, where there is a decision boundary above which $\hatY=1$ and below which $\hatY = 0$, then any properly-ranked classifier is \cut. This provides competency criteria. A properly-ranked classifier, even with an incorrect risk estimation, is competent for the decision-making of the class described here. Theorem~\ref{thorem:bayes_equivalent} indicates that $\mathcal{U}$-trustworthiness can be achieved without calibration or proper risk estimation. Hence, for this class of problems, calibration alone is neither a necessary nor a sufficient condition for a classifier to be \cut.

While properly-ranked classifiers possess desirable properties, they do not encompass the most general class of \cut\ classifiers. The following proposition serves as an example in this regard. 
\begin{proposition} \label{prop:extending_bayes_equivalent}
Suppose a group indicator $G$ splits $\mathcal{S}$ into two non-overlapping subsets. Then, if, for each group, the properly-ranked classifier condition is satisfied, then the classifier is \cut.     
\end{proposition}
According to the proposition, if the sample $\mathcal{S}$ can be split into two non-overlapping subsets based on a group indicator $G$, and within each group separately, we get the correct ranking, then the classifier is \cut. However, it is important to note that the properly-ranked condition may not be satisfied when comparing data points across the two groups. Therefore, while the properly-ranked condition is sufficient, it is not necessary for a classifier to be \cut. In general, however, we do not consider the classifiers in Proposition~\ref{prop:extending_bayes_equivalent} as \cut\ unless the grouping is known or is learned. If such grouping exists, but it is unknown, additional search and considerations on the user's side are required to guarantee $\mathcal{U}$-trustworthiness.

\subsection{Cost-sensitive trustworthy classifiers}

Let $\mathcal{U}$ be a class cost-sensitive utility functions that are given by a weighted combination of the four fundamental population quantities closely related to elements of the ``confusion matrix'' - true positives, false positives (a.k.a. type-I error), false negatives (a.k.a. type-II error) and true negatives as defined below
\begin{equation}
    \begin{cases}
        {\bf TP} = \mathbb{I}(Y = 1, \hatY = 1  \mid \bfx), \\
        {\bf FP} = \mathbb{I}(Y = 0, \hatY = 1  \mid \bfx), \\
        {\bf FN} = \mathbb{I}(Y = 1, \hatY = 0  \mid \bfx), \\
        {\bf TN} = \mathbb{I}(Y = 0, \hatY = 0  \mid \bfx),
    \end{cases}
\end{equation}
where $\mathbb{I}$ is the indicator function.

\begin{definition}
The cost-sensitive utility family is defined as
\begin{equation}
    \mathcal{U}(\{a_{ij}\}) = a_{11}  {\bf TP} - a_{01} {\bf FP} - a_{10} {\bf FN} + a_{00} {\bf TN}. 
\end{equation}    
where $a_{ij} \geq 0$ for all $i, j \in \{0, 1\}$.
\end{definition}

\begin{example}
The 0-1 loss function belongs to the family of cost-sensitive utility functions.
\end{example}

This is straightforward to show by setting  $a_{11} = a_{00} = 1$ and $a_{10} = a_{01} = 0$. This is equivalent to 0-1 loss function. 

\begin{lemma} 
The decision rule for the Bayes classifier is 
    \begin{equation*}
        \starg = \frac{a_{01} + a_{00}}{a_{11} + a_{00} + a_{10} + a_{01}} 
    \end{equation*}    
\end{lemma}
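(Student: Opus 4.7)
The plan is to compute the pointwise expected utility conditional on $\bfx$, compare the two possible decisions $\hatY = 1$ and $\hatY = 0$, and solve for the threshold on $p = P(Y=1 \mid \bfx)$ above which $\hatY = 1$ is preferred. Since the Bayes classifier is $\starf(\bfx) = P(Y=1 \mid \bfx)$, this threshold is exactly $\starg$.

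First, I would condition on $\bfx$ and note that the indicator-based quantities $\mathbf{TP}, \mathbf{FP}, \mathbf{FN}, \mathbf{TN}$ have conditional expectations that are products of $P(Y = y \mid \bfx)$ with $\mathbb{I}(\hatY = \hat y)$. Writing $p := \starf(\bfx)$, this gives
\begin{equation*}
\mathbb{E}[\mathcal{U} \mid \bfx, \hatY = 1] = a_{11} p - a_{01}(1-p),
\end{equation*}
\begin{equation*}
\mathbb{E}[\mathcal{U} \mid \bfx, \hatY = 0] = -a_{10} p + a_{00}(1-p).
\end{equation*}
Maximizing the conditional expected utility pointwise (which maximizes the overall expectation by tower property) amounts to choosing $\hatY = 1$ whenever the first expression is at least as large as the second.

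Next I would rearrange the inequality $a_{11} p - a_{01}(1-p) \geq -a_{10} p + a_{00}(1-p)$, collecting the $p$ terms on one side to obtain $(a_{11} + a_{10} + a_{01} + a_{00})\, p \geq a_{01} + a_{00}$. Dividing through by the positive normalization $a_{11} + a_{10} + a_{01} + a_{00}$ yields the claimed threshold
\begin{equation*}
p \geq \frac{a_{01} + a_{00}}{a_{11} + a_{00} + a_{10} + a_{01}} =: \starg,
\end{equation*}
which matches the form of Equation~(2) with $f = \starf$.

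There is no real obstacle here: the argument is a one-line pointwise optimization enabled by the linearity of the cost-sensitive family in the four confusion-matrix indicators. The only subtlety worth flagging is the tie-breaking convention at equality (consistent with the ``$\geq$'' in Equation~(2)) and that the denominator is strictly positive under the assumption $a_{ij} \geq 0$, at least one of which must be nonzero for the problem to be nondegenerate; I would note these in passing but not belabor them.
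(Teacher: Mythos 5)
Your proposal is correct and follows essentially the same route as the paper's proof: both condition on $\bfx$, use the tower property to reduce to pointwise maximization, compare the conditional expected utilities of the two actions in terms of $p = P(Y=1\mid\bfx)$, and rearrange the inequality $a_{11}p - a_{01}(1-p) \ge -a_{10}p + a_{00}(1-p)$ to obtain the threshold. The only cosmetic difference is that the paper phrases the comparison via the weights $P(\starY=1\mid\bfx)$ and $P(\starY=0\mid\bfx)$ on the two coefficient expressions rather than directly comparing the two deterministic actions, and your remark on the strictly positive denominator and the tie-breaking convention is a small point the paper leaves implicit.
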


This lemma is the reliability condition that suggests for cost-sensitive utility function that there exists a solution of form Equation~\eqref{eq:solutoin}. We also note that this class of utility functions is characterized by the coefficients $a_{ij}$, which in principle can be functions of $\bfx$, where $\bfx$ represents additional contextual information. For example, consider the utility associated with the survival of young teens, which might be higher compared to older individuals in certain scenarios. Similarly, the costs associated with the passing away of individuals could also vary based on contextual factors.

\begin{theorem}
   Let $\mathcal{U}$ be the cost-sensitive utility class. Properly-ranked classifiers are \cut. 
\end{theorem}

\subsection{Equity-aware trustworthy classifiers}

There has been a surge of interest in formulating utility functions that simultaneously account for both efficiency and equity. In this section, we 
utilize a class of equity-aware functions proposed by \citet{kleinberg2018algorithmic} and demonstrate that properly-ranked classifiers are \cut. First, we define the compatibility criteria and a class of equity-aware utility functions. 

\begin{definition}[Compatibility]
The utility function $\phi$ is compatible with the Bayes classifier if the
following natural monotonicity condition holds. If $S$ and $S^{\prime}$ are two sets of $\mathcal{X}$ of the same size, sorted in descending order of $\starf(\bfx)$, and $\starf(\bfx)$ of the $i^{\rm th}$ item in $S$ is at least as large as $\starf(\bfx)$ of the $i^{\rm th}$ item in $S^{\prime}$ for all $i$, then $\phi(S_i) \ge \phi(S^{\prime}_i)$.
\end{definition}

\begin{definition}[Equity-aware Utility Class]
Suppose there is a binary variable $G$ that splits the data into two non-overlapping subsets. The equity-aware utility family is defined as
\begin{equation}
    \mathcal{U}(\phi, \gamma) = \phi(S) + \gamma(S), 
\end{equation} 
where $S \subseteq \mathcal{X}$, $\phi(S) \in \Phi$ is compatible with Bayes probability, and $\gamma(S) \in \Gamma$ is monotonically increasing in the number of items in $S$ who have $G = 1$. 
\end{definition}

The first term characterizes the benefit while the second term characterizes the fairness. An equity-aware decision-maker seeks to maximize $U(S) = \phi(S) + \gamma(S)$. This utility function class and the following Lemma follow that of \citet{kleinberg2018algorithmic}. 

\begin{lemma}[Theorem 1, \citet{kleinberg2018algorithmic}] 
For some choice of $K_0$, from group $G=0$, and $K_1$, from group $G=0$, with $K_0 + K_1 = K$, the solution that maximizes utility in the $G = 0$ and in the $G = 1$ group are the ones with the
highest $\starf(\bfx)$.
\end{lemma}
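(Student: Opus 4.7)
The plan is to use a standard exchange argument that exploits the additive structure $U(S) = \phi(S) + \gamma(S)$ and treats the two terms independently. Fix any feasible selection $S$ with $|S \cap \{G = 0\}| = K_0$ and $|S \cap \{G = 1\}| = K_1$, and let $T_0$ denote the top $K_0$ items by $\starf(\bfx)$ in the group $G = 0$ and $T_1$ the top $K_1$ items in the group $G = 1$. The goal is to show $U(T_0 \cup T_1) \ge U(S)$, which establishes that $T_0 \cup T_1$ is weakly optimal among all selections obeying the prescribed within-group quotas.

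First I would observe that $\gamma(S) = \gamma(T_0 \cup T_1)$: because $\gamma$ depends only on the count of items with $G = 1$ and both sets contain exactly $K_1$ such items, the fairness term is invariant under any within-group replacement. Hence it suffices to show $\phi(T_0 \cup T_1) \ge \phi(S)$. To this end, I would construct a finite sequence $S = S^{(0)}, S^{(1)}, \ldots, S^{(m)} = T_0 \cup T_1$ in which $S^{(j+1)}$ is obtained from $S^{(j)}$ by a single same-group swap: pick a group $g \in \{0, 1\}$ for which $S^{(j)}$ does not yet contain the top $K_g$ items, choose some $a \in S^{(j)}$ with $G(a) = g$ lying outside the top $K_g$, and some $b$ with $G(b) = g$ inside the top $K_g$ but not in $S^{(j)}$; necessarily $\starf(b) \ge \starf(a)$. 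After each swap, sorting both $S^{(j)}$ and $S^{(j+1)}$ in descending order of $\starf$ produces a sorted sequence for $S^{(j+1)}$ whose $i$-th entry is at least that of $S^{(j)}$ for every $i$. The compatibility hypothesis then yields $\phi(S^{(j+1)}) \ge \phi(S^{(j)})$, and telescoping delivers $\phi(T_0 \cup T_1) \ge \phi(S)$.

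The main technical obstacle, and essentially the only non-bookkeeping step, is verifying the pointwise dominance of the sorted $\starf$-sequences after a single swap. Concretely, if $b$ replaces $a$ with $\starf(b) \ge \starf(a)$ and all other $\starf$-values are retained, I need to argue that the multiset of $\starf$-values of $S^{(j+1)}$ dominates that of $S^{(j)}$ coordinatewise after sorting; this follows from an elementary case analysis on the positions of $a$ and $b$ within the combined sorted order. Once this monotonicity lemma is in hand, the remainder is a direct invocation of compatibility plus additivity. I note that the statement is conditional on the split $(K_0, K_1)$ being fixed; the trade-off between $\phi$ and $\gamma$ that would determine the \emph{optimal} split across groups is not the object of this lemma, which simplifies the argument considerably.
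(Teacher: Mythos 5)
Your proposal is correct and follows essentially the same route as the paper's proof: the fairness term $\gamma$ is invariant because the number of $G=1$ items is preserved, and the benefit term $\phi$ is handled by comparing the sorted $\starf$-sequences and invoking compatibility. The only difference is that you establish the coordinatewise dominance of the sorted sequences via a chain of single same-group swaps, whereas the paper asserts it directly in one step for $S^{+}$ versus $S^{*}$; your version merely makes that step more explicit.
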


This lemma provides the reliability condition.

\begin{theorem}
Let $\mathcal{U}$ be an equity-aware utility class. Properly-ranked classifiers are \cut.
\end{theorem}

\section{AUC and Its Relation to $\mathcal{U}$-Trustworthiness}

An ROC curve provides a graphical representation of classifier performance by comparing the true positive rate (TPR) to the false positive rate (FPR) across various decision thresholds. AUC -- the area under the ROC curve -- is a numerical measure of the classifier's performance. It quantifies the classifier's ability to rank a randomly selected positive example higher than a randomly chosen negative example, given that the positive class is ranked higher than the negative class \citep{hanley1982meaning}.

\subsubsection{Pairwise estimator of AUC.}

In probabilistic terms, the AUC represents the probability of correctly ranking the two examples \citep{hanley1982meaning}. The pairwise estimator for the AUC is commonly known as the Wilcoxon-Mann-Whitney statistic \citep{agarwal2005generalization}. It is calculated by comparing all possible pairs of observations, where one observation belongs to class 1 and the other belongs to class 0, 
\begin{equation} \label{eq:AUC_def}
{\rm AUC}(f) = \mathbb{E} \left[ \mathcal{H}(f(\bfx^{+}) - f(\bfx^{-})) \right].    
\end{equation}
$(\bfx^{+}, \bfx^{-})$ is a pair on i.i.d. draws from class 1 and zero. The expected value is computed over $\{ (\bfx^{+}, \bfx^{-}) \in \mathcal{D}^{+} \times \mathcal{D}^{-}\}$ where $\mathcal{D}^{+/-}$ is the distribution over data with class 1/0.  $\mathcal{H}(.)$ is the Heaviside step function which returns $1$ if the argument is positive, 1/2 if the argument is zero, and $0$ otherwise. An estimator of this expected value is
\begin{equation} \label{eq:AUC_estimator}
   \widehat{\rm AUC}(f) = \frac{1}{|S^{+}||S^{-}|} \sum_{i=1}^{|S^{+}|} \sum_{j=1}^{|S^{-}|} \mathcal{H}(f(\bfx_i)-f(\bfx_j))
\end{equation}
where $S^{+/-} = \{ \bfx \in S : y=1/0 \}$. The properties of this estimator are studied extensively in the literature \citep{airola2011experimental,agarwal2005generalization,cortes2004confidence}. 

\begin{theorem} [$\mathcal{U}$-Competency Measure] \label{theorem:Competency-Measure}
Let $\mathcal{U}$ be a utility class with a decision boundary of Equation~\eqref{eq:solutoin}. If and only if $f_{\rm PR}$ is a properly-ranked classifiers then ${\rm AUC}(\starf) =  {\rm AUC}(f_{\rm PR})$.
\end{theorem}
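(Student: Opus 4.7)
The plan is to prove both directions of the biconditional separately, leveraging the rank-based nature of AUC as given in Equation~\eqref{eq:AUC_def}.

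For the forward direction, assuming $f_{\rm PR}$ is properly-ranked, I would first observe that Definition~\ref{bayes-equiv} implies the sign of every pairwise score difference is preserved: for every $\bfx_1, \bfx_2 \in \mathcal{X}$, ${\rm sign}(f_{\rm PR}(\bfx_1) - f_{\rm PR}(\bfx_2)) = {\rm sign}(\starf(\bfx_1) - \starf(\bfx_2))$. The equality case comes directly from the second clause of Definition~\ref{bayes-equiv}, while the strict-inequality case follows by applying the first clause to both orderings $(\bfx_1, \bfx_2)$ and $(\bfx_2, \bfx_1)$. Since $\mathcal{H}(\cdot)$ depends only on the sign of its argument, with $\mathcal{H}(0)=1/2$ handling ties symmetrically, one obtains $\mathcal{H}(f_{\rm PR}(\bfx^+) - f_{\rm PR}(\bfx^-)) = \mathcal{H}(\starf(\bfx^+) - \starf(\bfx^-))$ pointwise; taking the expectation over $\mathcal{D}^{+} \times \mathcal{D}^{-}$ in Equation~\eqref{eq:AUC_def} then yields ${\rm AUC}(f_{\rm PR}) = {\rm AUC}(\starf)$.

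For the reverse direction, I would establish that $\starf$ maximizes AUC via a pairwise exchange argument. Fix an unordered pair $\{\bfx_1, \bfx_2\}$ and set $q_{12} := p(\bfx_1 \mid Y=1)\,p(\bfx_2 \mid Y=0)$ and $q_{21} := p(\bfx_2 \mid Y=1)\,p(\bfx_1 \mid Y=0)$. A short application of Bayes's rule shows $\starf(\bfx_1) > \starf(\bfx_2) \iff q_{12} > q_{21}$. Summing the two ordered contributions of this pair to Equation~\eqref{eq:AUC_def} gives $q_{12}$, $q_{21}$, or $(q_{12}+q_{21})/2$ according as $f(\bfx_1) > f(\bfx_2)$, $f(\bfx_1) < f(\bfx_2)$, or $f(\bfx_1) = f(\bfx_2)$, so on every strictly non-tied pair the joint contribution is strictly maximized by the $\starf$-consistent ordering. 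Hence $\starf$ is AUC-maximal, and any violation of the first clause of Definition~\ref{bayes-equiv} on a pair of positive joint density produces a strictly smaller contribution; summing over pairs shows ${\rm AUC}(f_{\rm PR}) = {\rm AUC}(\starf)$ forces $f_{\rm PR}$ to reproduce the strict ordering of $\starf$ on the support of $\mathcal{D}$.

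The principal obstacle lies in the tie case of Definition~\ref{bayes-equiv}: when $\starf(\bfx_1) = \starf(\bfx_2)$ one has $q_{12} = q_{21}$, so the two orderings make identical contributions to AUC, and the reverse direction does not by itself force the equality clause $f_{\rm PR}(\bfx_1) = f_{\rm PR}(\bfx_2)$. Closing this gap cleanly requires either interpreting the biconditional up to rank-equivalence on $\starf$-ties, or making a mild non-degeneracy assumption so that $\starf$ induces a strict ordering on the support of $\mathcal{D}$. With this caveat handled, the remainder of the argument is essentially bookkeeping once the Bayes-rule identity $\starf(\bfx_1) > \starf(\bfx_2) \iff q_{12} > q_{21}$ is in hand.
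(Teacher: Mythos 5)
Your forward direction is essentially identical to the paper's: Definition~\ref{bayes-equiv} preserves the sign of every pairwise score difference, $\mathcal{H}$ depends only on that sign (with ties mapped to $1/2$), so the integrands in Equation~\eqref{eq:AUC_def} agree pointwise and the expectations coincide. Where you genuinely diverge is the reverse direction. The paper argues tersely: it asserts ${\rm AUC}(f) \le {\rm AUC}(\starf)$ for all $f$, and then claims that equality of the two AUCs forces $\mathcal{H}[\starf(\bfx^{+})-\starf(\bfx^{-})] = \mathcal{H}[f_{\rm PR}(\bfx^{+})-f_{\rm PR}(\bfx^{-})]$ for \emph{all} opposite-class pairs, from which the properly-ranked condition is read off. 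Your pairwise exchange argument --- computing $q_{12}, q_{21}$, verifying $\starf(\bfx_1) > \starf(\bfx_2) \iff q_{12} > q_{21}$ via Bayes's rule, and showing the joint contribution of each unordered pair is maximized by the $\starf$-consistent ordering --- actually supplies the justification the paper skips: it proves the maximality claim and shows that any strict-order violation on a set of positive density strictly lowers the AUC, which is what legitimately converts equality of expectations into a pointwise statement.

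Your caveat about ties is not merely a loose end in your own write-up; it is a genuine defect in the theorem as stated, and it is exactly where the paper's proof is unsound. When $\starf(\bfx_1) = \starf(\bfx_2)$ one has $q_{12} = q_{21}$, so assigning $f_{\rm PR}$ a strict order on that pair leaves the AUC unchanged while violating the equality clause of Definition~\ref{bayes-equiv}; hence ${\rm AUC}(\starf) = {\rm AUC}(f_{\rm PR})$ cannot force that clause, and the paper's inference from equal AUCs to pointwise equality of the Heaviside terms fails precisely on tied pairs. A second gap, which your ``support of $\mathcal{D}$'' remark brushes against but which you should make explicit, is that the AUC only constrains pairs $(\bfx^{+},\bfx^{-}) \in \mathcal{D}^{+} \times \mathcal{D}^{-}$, whereas Definition~\ref{bayes-equiv} quantifies over all of $\mathcal{X} \times \mathcal{X}$; points with $\starf(\bfx) \in \{0,1\}$ or outside the support are never tested. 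So the clean statement your argument actually proves is: ${\rm AUC}(f) = {\rm AUC}(\starf)$ if and only if $f$ reproduces the strict ordering of $\starf$ almost everywhere on $\mathcal{D}^{+} \times \mathcal{D}^{-}$ --- which still suffices for the downstream competency claims, since by Theorem~\ref{thorem:bayes_equivalent} only the induced decision regions matter, not the resolution of $\starf$-ties. Your approach is the more rigorous of the two; finish it by either restating the theorem with that weakened converse or adding the non-degeneracy assumption you propose.
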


Theorem~\ref{theorem:Competency-Measure} provides a theoretical justification for why AUC may be used as a measure of competency, implying that if the AUC of a classifier is equal to the Bayes classifier, then the classifier is competent to achieve the maximume possible expected utility. Unlike the measures constructed by the confusion matrix entries, the AUC is the precision of pairwise rankings \citep{hanley1982meaning}. A properly-ranked classifier produces the same ranking as the Bayes classifier. Hence, both classifiers are expected to exhibit similar AUC performance, as implied from the above theorem, while their error rate, accuracy, and calibration can differ significantly \citep{cortes2004confidence,huang2005using}.

The literature regarding the suitability of AUC as an evaluation metric is subject to varying opinions. For instance, \citet{huang2005using} have supported the use of AUC by providing evidence that it has greater discriminative ability than accuracy, while \citet{lobo2008auc}  have cautioned against using AUC, primarily because it does not assess goodness-of-fit or might result in poor calibration. However, our results contend that in applications where utility maximization is a priority, AUC or potentially other ranking quality metrics may be more favorable. These applications include tasks where the relative ordering of actions is crucial for decision-making, such as recommendation systems \citep{schroder2011setting}, information retrieval \citep{nguyen2016probabilistic}, or task delegation \citep{farzaneh2023collaborative}. In the next section, we provide empirical evidence that utilizing AUC and accuracy can lead to different results, and when utility maximization is a priority, one should rely on AUC.

\section{Applications and Empirical Evidence}
In this section, we illustrate various practical benefits of using AUC as a performance measure in the context of $\mathcal{U}$-trustworthiness. We provide empirical evidence that AUC outperforms popular performance metrics such as accuracy in model comparison and hyperparameter tuning. We recall the definition of $\mathcal{U}$-trustworthiness, a model that achieves the highest maximum expected utility. So a model with the highest maximum expected utility is more trustworthy. When not mentioned, we use a 0-1 utility function $U(f) = {\bf TP} + {\bf TN}$. We note that the decision threshold, $\hatg$, might be different across the models. We first solve for $\hatg$ by varying the decision threshold, and then report the maximum utility. We additionally use a more complex utility function to illustrate the results hold as long as it belongs to the class of utility function discussed above. Our prediction task is to predict which household is a homeowner using the 2019 American Housing Survey data. See Supplementary Materials for the description of data sets and additional experiments.

\begin{table}[h]
\caption{Model selection results.} \vspace{-1mm}
\centering
\scriptsize{
\begin{tabularx}{0.46\textwidth}{|X|c|c|c|}
\toprule
 \textbf{Measure}  & \textbf{RF} & \textbf{LR}  & \textbf{kNN}  \\ 
\midrule
    AUC & ${\bf 0.797 \pm 0.003}$ & $0.788 \pm 0.004$ & $0.785 \pm 0.003$   \\
    Accuracy & $0.731 \pm 0.005$ & ${\bf 0.736 \pm 0.005}$ & $0.731 \pm 0.004$  \\
    Brier   & $0.182 \pm 0.001$ & ${\bf 0.177 \pm 0.002}$ & $0.180 \pm 0.002$  \\
    NetTrust & $0.580 \pm 0.001$ & $0.591 \pm 0.002$ & ${\bf  0.598 \pm 0.002}$ \\
\midrule
    $U^{(m)}$   & ${\bf 0.744 \pm 0.004}$ & $0.739 \pm 0.004$ & $0.735 \pm 0.004$  \\
  \bottomrule                          
\end{tabularx}
\label{tab:application} 
}
\end{table}

\begin{figure}[ht]
    \centering 
    \includegraphics[width=0.23\textwidth]{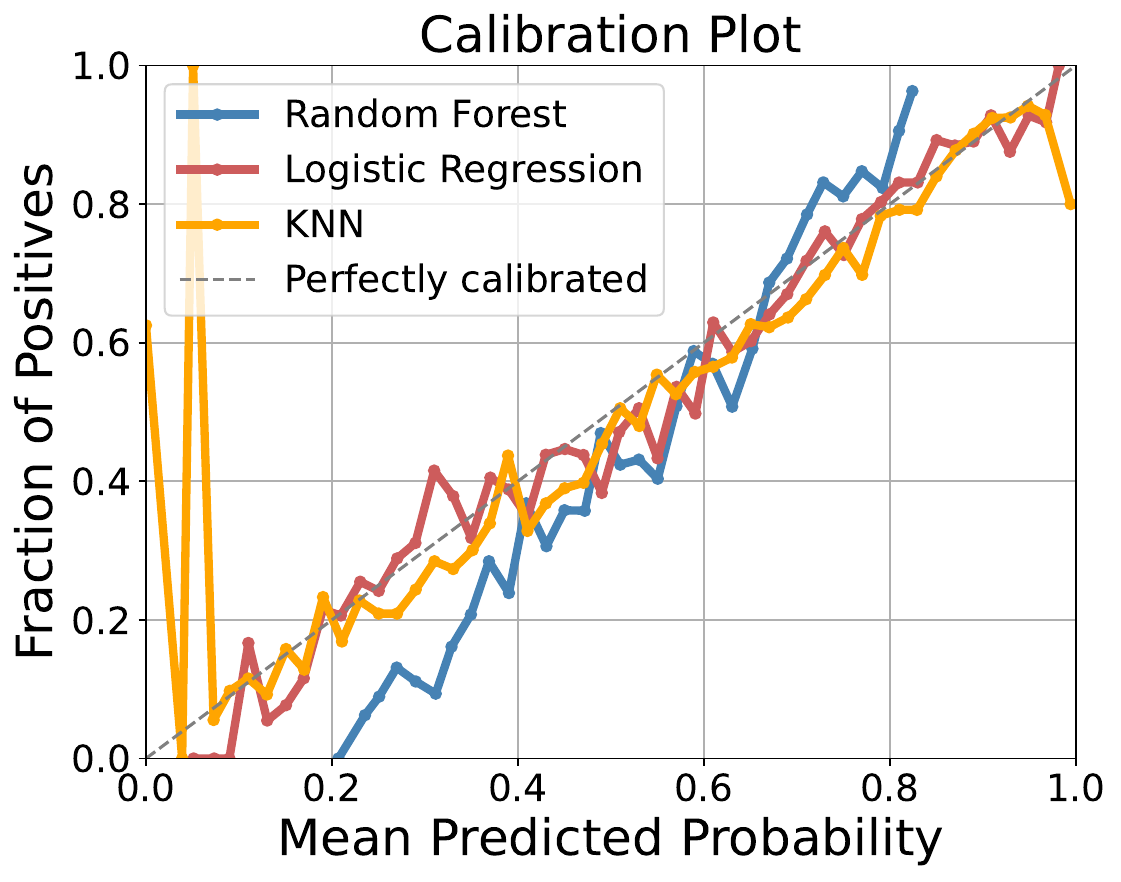} 
    \includegraphics[width=0.23\textwidth]{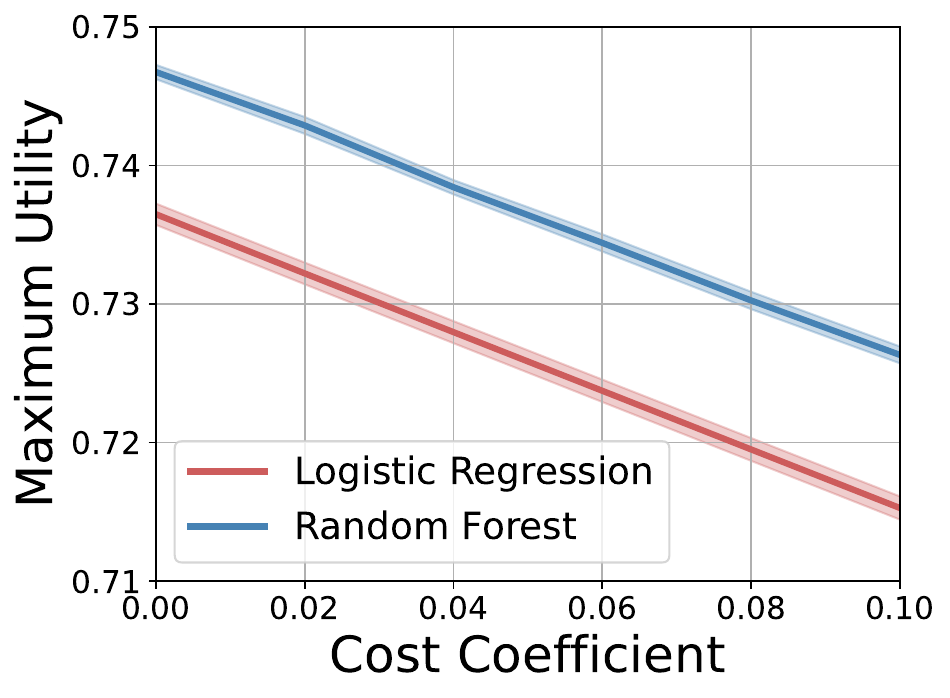} \vspace{-2mm}
    \caption{{\bf Left}: Average calibration curve. {\bf Right}: The performance of Logistic Regression, selected based on accuracy/Brier score, and Random Forest, chosen based on AUC, for a class of utility functions specified by parameter $c$, cost coefficient.
    The average maximum utility on the test sample for 20 random test/train realizations and the shaded region is 68\% error on the mean. }
    \label{fig:model_comp}
\end{figure}

\subsection{Model Selection} 
\label{model-sel}

During model selection, conflicting performance measures can lead to uncertainty about which metric should be given priority. To address this issue, we argued that AUC should be the preferred choice if the ultimate goal is $\mathcal{U}$-trustworthiness. This study compares several popular performance measures across different models using data from the homeownership dataset. Additional experiments and results can be found in the Supplementary Materials. For model comparison, we consider four metrics: AUC, Brier score, accuracy, and NetTrust score, and three models: Logistic Regression, Random Forest, and k-nearest neighbors. The evaluation results are presented in Table~\ref{tab:application}. Notably, relying solely on the accuracy or Brier score would select Logistic Regression, and NetTrust would select kNN as the preferred model. This conclusion could be further reinforced by examining the calibration properties of Logistic Regression compared to Random Forest. (the left panel of Figure~\ref{fig:model_comp}). However, when considering AUC, Random Forest emerges as the preferred choice. Given that the ultimate goal is to maximize expected utility (\cut\ setting), Random Forest should be the preferred choice, as it aligns with the AUC criteria for $\mathcal{U}$-trustworthiness. This observation highlights the importance of AUC as a reliable measure for model selection when considering the ultimate objective of maximizing expected utility. 

The second experiment evaluates the above claim but for a class of utility functions. We now explore the maximum expected utility for a class of utility functions specified by parameter $c$, defined as
\begin{equation}
\mathcal{U}[c] = {\bf TP} + {\bf TN} - c \times {\bf FP} - 0.5 c \times  {\bf FN}.
\end{equation}
In the right panel of Figure~\ref{fig:model_comp}, we present the average maximum utility on the test sample for 20 random test/train realizations. The model selected based on AUC consistently outperforms the model with lower AUC but higher Brier or accuracy score in terms of expected maximum utility. This demonstrates the significance of AUC as a reliable measure for model comparison in the context of $\mathcal{U}$-trustworthiness.

\subsection{Hyper-parameter Tuning}
Tuning of hyper-parameters is an integral part of the model-building procedure. Next, we focus on a k-nearest neighbor classifier with $k$ as the hyperparameter and investigate the impact of tuning based on different performance measures. Specifically, we compare the use of AUC and accuracy as the performance metrics during the tuning process. To conduct our analysis, we employ a homeownership dataset and perform 20-fold cross-validation to fine-tune $k$. We evaluated the model's performance using both AUC and accuracy metrics during the tuning process. The left panel of Figure \ref{fig:example_Brier_score} shows the results of tuning using AUC (blue curve) and accuracy (red curve) as performance measures. Next, we maximize the utility by varying the decision threshold. We use a non-trivial utility function that changes with the age of the householder. Let the reward of ${\bf TP} = {\bf TN}$ be 1, and the costs of the ${\bf FP}$ and ${\bf FN}$ be $C({\rm FP}) = (1 - {\rm Age} / 100) \times 3 $ and $C({\rm FN}) = (1 - {\rm Age} / 100) \times 0.5$, and the utility function for the test sample is
$U = \frac{1}{n_{\rm test}}\sum_{i=1}^{n_{\rm test}}\left[\mathbb{I}(y_i - \hat{f}_i) \times R(x_i, y_i) - |y_i - \hat{f}_i| \times C(x_i, y_i) \right]$.

Finally, we train two models, one with $k=150$ and one with $k=200$, and compute and report the utility of the test sample by varying the decision threshold (The right panel of Figure \ref{fig:example_Brier_score}). We find that selecting the hyperparameter based on AUC ($k=200$) leads to a model with higher utility than one with a hyperparameter based on accuracy ($k=150$). This observation is evident in both graphs, emphasizing the superiority of AUC as a performance measure when aiming to maximize utility. To account for randomness, we repeated the experiment with 200 random data realizations, and the line shows the mean, and the shaded region is the standard error on the mean. This result supports our claim that AUC is superior to accuracy, where utility maximization is the eventual goal, even for nontrivial utility functions.

\begin{figure}[ht]
    \centering 
    \includegraphics[width=0.24\textwidth]{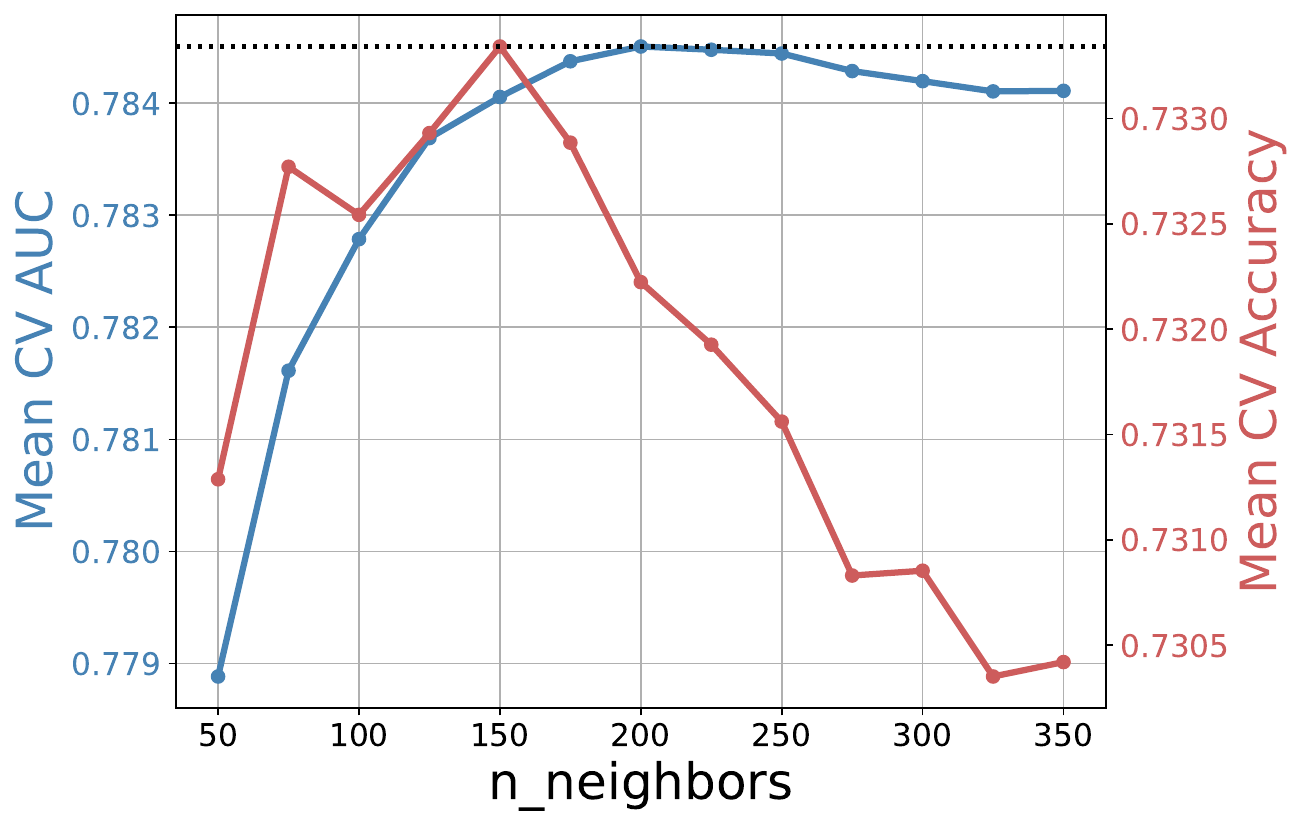} 
    \includegraphics[width=0.21\textwidth]{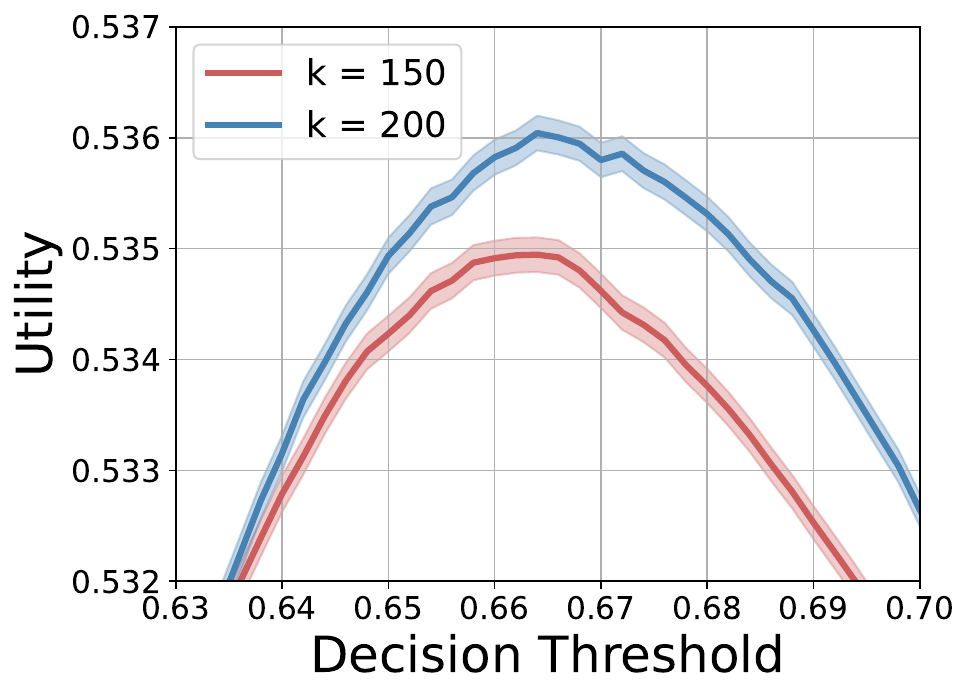} \vspace{-2mm}
    \caption{Hyper-parameter Tuning of k-NN. 
    \textbf{Left}: Average cross-validation performance vs. \texttt{n\_neighbors}. The optimal $k$ for (accuracy) AUC is (150) 200, indicated by the dotted horizontal line. \textbf{Right}: Utility as a function of decision threshold for $k = 150$ (red curve) and $k = 200$ (blue curve). The shaded region represents the standard error on mean based on 200 random test/train realizations. }
    \label{fig:example_Brier_score}
\end{figure}

\section{Discussion}

AUC has faced criticism for its limited consideration of predicted probability values and the model's goodness-of-fit \citep{lobo2008auc}. Other findings indicate that AUC surpasses metrics built on the elements of the confusion matrix, such as accuracy or the Matthew correlation coefficient, in terms of discriminative power \citep{huang2005using,halimu2019empirical}. Our results provide a clear guideline for selecting the appropriate model performance measure, specifically focusing on AUC, within the class of problems concerned with utility maximization. The findings suggest that for such problems, calibration and goodness-of-fit, although closely related measures are neither necessary nor sufficient conditions for $\mathcal{U}$-trustworthiness. It is important to note that $\mathcal{U}$-trustworthiness definition does not apply to problems requiring an unbiased risk estimate or involving objectives other than maximizing utility. While our proposed trust framework can be extended to a broader class of problems, we acknowledge its limitations in generalizing beyond utility maximization concerns.

\subsection{Limitations}

This work comes with certain limitations that should be acknowledged and provides a path for future studies. First and foremost, we recall that our framework is grounded on the competence-based trust theory; but there are other theoretical trust/trustworthiness frameworks proposed and used by the AI community \citep{toreini2020relationship, serban2021practices, li2023trustworthy}. Our framework establishes a foundation on hypothesis testing the claim ``A trusts B to do X.'' To simplify the problem further, we remove trustor A from the traditional three-part trust relation. This is the first step towards establishing full, three-part trustworthiness. Gaining users' trust requires additional domain-specific efforts in utilizing the predictive model's outcomes 
\citep{chatzimparmpas2020state,boyd2023value,mittermaier2023collaborative}. Secondly, the theorem presented in this work are for binary classifiers and do not generalize to the multi-class case. Furthermore, the \cut\ evaluation framework proposed in this work is specifically designed for a subset of tasks that seek to maximize the expected utility with solutions in the form of Equation~\eqref{eq:solutoin}. Consequently, this definition of trustworthiness should not be generalized to other tasks, such as risk mitigation or population inference, potentially rendering a model trustworthy for one set of tasks but not others. Lastly, the use of AUC as an evaluation metric in this work presents its own challenges, as the AUC of a \cut\ model remains unknown, making hypothesis testing difficult. Further research is needed to understand better the relationship between the \cut\ AUC and performing hypothesis testing, that is related to the last requirement of \cut\ which is confidence.

\subsection{Distinction with Fairness}

We differentiate between the fairness concepts and trustworthiness, as defined here, specifically in the context of the equity-aware utility class. A predictive model that assigns rankings or a risk score to individuals cannot be characterized as fair or unfair. Fairness becomes relevant when this ranking is employed to make decisions about individuals (who get the load, who will be released on bail, who will be hired). In the absence of decision-making and a decision rule, fairness does not come into play. Conversely, \cut\ refers to an inherent characteristic of a predictive model with respect to a utility function class. It asks whether the maximum achievable utility can be realized for every utility function belonging to this class or not, but which utility function should be used is a concern of fairness. 

\section{Conclusion}

In this work, grounded in the philosophy of trust, we integrate competence-based trust theory with the evaluation of predictive models. This work establishes a foundation for modeling and quantifying the trustworthiness of predictive models in decision-making contexts. The outcome of this effort promotes responsible AI development, enhances reliability and competence, and fosters user confidence, contributing to the advancement of ethical and transparent AI systems. Moreover, embracing a non-motives-based perspective aligns with the quest for objectivity and reliability in AI technologies, paving the way for ethically sound and socially beneficial applications of AI systems.


\bibliography{ref}

\newpage
\onecolumn

\appendix

\section*{\Huge Supplementary Materials}

\vspace{5mm}

\section*{\huge $\mathcal{U}$-Trustworthy Models.\\
Reliability, Competence, and Confidence in Decision-Making}

\vspace{15mm}

\addcontentsline{toc}{section}{Appendix}

\section{Outcome, Decision Assignment, and Decision Outcome}

In our context, we have defined two binary variables, $Y \in \{0, 1\}$ and $\hatY \in \{0, 1\}$, representing the outcome and decision assignment, respectively. The binary outcome could signify whether a student succeeds in a program, a patient's survival after surgery, or loan repayment status. However, it is crucial to distinguish decision assignment from outcome, as they can have distinct characteristics. For instance, a decision assignment might be determining whether a student should be admitted to a program, a patient should undergo surgery, or someone should receive a loan.

To capture the desirability or usefulness of decision outcomes based on given decision assignments, we define the utility function as $U(\bfx, Y, \hatY): \mathcal{X} \times \{0, 1\} \times \{0, 1\} \rightarrow \mathbb{R}$. It's important to note that utility, in our framework, does not always equate to efficiency. Instead, it accounts for broader considerations beyond mere efficiency, making it an essential aspect of our approach. This utility function has the capacity to incorporate factors such as fairness and equity. As an example, it allows us to prioritize a particular subset of the population for decision-making, potentially resulting in higher utility for that subset. We refer the reader to Section Equity-aware trustworthy classifiers.

\section{Limitations of Calibration. A Case Study}

\paragraph{Simulation Setup.} We initially constructed a simulated data set with three sets of normally distributed random variables $X_1$, $X_2$, and $X_3$ each of size 15000. This data set was aimed at quantitatively evaluating the calibration of the models. 

The true probability p of each instance being in class 1 is then computed based on the formula $p = {\rm sigmoid}(0.5X_1 - X_2 + 0.5X_3)$. Here, sigmoid is the sigmoid activation function which is used to transform the output to a value between 0 and 1, thus allowing it to represent a probability. The true class labels are then generated from a Bernoulli distribution based on these probabilities.

Three classifiers were considered for this experiment. 
\begin{itemize}
    \item The first one represents a Bayes-optimal classifier which has perfect knowledge about the underlying distribution of the data. The probabilities from this classifier, p,  are the same as the one used to generate the true class labels. 

    \item The second classifier represents a Bayes-equivalent classifier. The probabilities from this classifier, $p_1$, are computed using the formula $p_1 = {\rm sigmoid}(0.5X_1 - X_2 + 0.5X_3 + 1.0)$. This classifier is Bayes-equivalent since it satisfies Definition \ref{bayes-equiv}.

    \item The third classifier represents a calibrated classifier. The probabilities from this classifier, $p_2$, are computed using the formula $p_2 = {\rm sigmoid}(0.5X_1 - X_2)$. This classifier is perfectly calibrated as it has full knowledge of the data generation process.
\end{itemize}

\section{Proof of Lemmas, Propositions, and Theorems.}

\setcounter{definition}{0}
\setcounter{lemma}{0}
\setcounter{theorem}{0}
\setcounter{proposition}{0}

For completeness and readability, we include all the definitions, lemmas, proofs, and theorems mentioned in the main text. 

\subsection{Definitions}

\begin{definition}[$\mathcal{U}$-Trustworthy]
    A model, denoted with $\widetilde{f}(.)$, is \cut\ if $U_{f}^{(m)} \le U^{(m)}_{\tildef} \ \ \forall \ U \in \mathcal{U}$ and $\forall \ f \in \mathcal{F}$. 
\end{definition} 

\begin{definition}
    Let $f^{\star}(\bfx)$ denotes the Bayes classifier, implying $f^{\star}(\bfx) = P(Y = 1 \mid \bfx)$; and $\starY$ and $\starg(\bfx; U)$ be the optimal Bayes decision and decision rule associated with the utility function $U$.
\end{definition}

\begin{definition}
    Model $f(\bfx)$ is calibrated if $P(Y = 1 \mid f(\bfx)=\alpha) = \alpha$ for all $\alpha \in [0, 1]$.
\end{definition}

\begin{definition}[Properly-Ranked Classifier]\label{bayes-equiv}
Let a classifier $f_{\rm PR}(.)$ be a properly-ranked classifier if
\[
        \forall \bfx_1, \bfx_2 \in \mathcal{X}
    \begin{cases}
            \starf(\bfx_1) > \starf(\bfx_2) \Rightarrow f_{\rm PR}(\bfx_1) > f_{\rm PR}(\bfx_2) \\
            \starf(\bfx_1) = \starf(\bfx_2) \Rightarrow f_{\rm PR}(\bfx_1) = f_{\rm PR}(\bfx_2)
    \end{cases}.
\]
\end{definition}

\begin{definition}
The cost-sensitive utility family is defined as
\begin{equation}
    \mathcal{U}(\{a_{ij}\}) = a_{11}  {\bf TP} - a_{01} {\bf FP} - a_{10} {\bf FN} + a_{00} {\bf TN}. 
\end{equation}    
\end{definition}

\begin{definition}[Compatibility]
The utility function $\phi$ is compatible with the Bayes classifier if the
following natural monotonicity condition holds. If $S$ and $S^{\prime}$ are two sets of $\mathcal{X}$ of the same size, sorted in descending order of Bayes probability $P(y \mid \bfx)$, and $P(y \mid \bfx)$ of the $i^{\rm th}$ item in $S$ is at least as large as $P(y \mid \bfx)$ of the $i^{\rm th}$ item in $S^{\prime}$ for all $i$, then $\phi(S) \ge \phi(S^{\prime})$.
\end{definition}

\begin{definition}[Equity-aware Utility Class]
Suppose there is a binary variable $G$ that splits the data into two non-overlapping subsets. The equity-aware utility family is defined as
\begin{equation}
    \mathcal{U}(\phi, \gamma) = \phi(S) + \gamma(S), 
\end{equation} 
where $S \subseteq \mathcal{X}$, $\phi(S)$ is compatible with Bayes probability, and $\gamma(S)$ and is monotonically increasing in the number of items in $S$ who
have $G = 1$.
\end{definition}

The first term characterizes the benefit while the second term characterizes the fairness. An equity-aware decision-maker seeks to maximize $U(S) = \phi(S) + \gamma(S)$.

\subsection{Problem Setup}

\begin{proposition} \label{proposition:bayes-classifier}
Let $f^{\star}(\bfx)$ be the Bayes classifier, then $U^{(m)}_{f} \le U^{(m)}_{\starf} \ \ \forall \ U \in \mathcal{U}$ and $\forall \ f \in \mathcal{F}$.
\end{proposition}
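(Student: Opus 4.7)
The plan is to reduce the expected utility of any classifier to a pointwise expression in $\bfx$ that depends only on the Bayes posterior $\starf(\bfx)$, upper bound it by the pointwise maximum over binary decisions, and then argue that the Bayes classifier realizes this bound through the paper's standing threshold-form assumption.

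First I would fix $U \in \mathcal{U}$ and $f \in \mathcal{F}$ with any decision rule $\hatg$. The induced decision $\hatY = \mathbb{I}(f(\bfx) \ge \hatg(\bfx; U))$ is a deterministic function of $\bfx$; write it $\hatY(\bfx)$. Applying the tower property and using $P(Y = 1 \mid \bfx) = \starf(\bfx)$ by definition of the Bayes classifier,
\begin{equation*}
\mathbb{E}[U(\bfx, Y, \hatY)] = \mathbb{E}_{\bfx}\!\left[\starf(\bfx)\, U(\bfx, 1, \hatY(\bfx)) + (1 - \starf(\bfx))\, U(\bfx, 0, \hatY(\bfx))\right].
\end{equation*}
For each $\bfx$ the integrand is a function of the binary variable $\hatY(\bfx) \in \{0, 1\}$ and is therefore bounded pointwise by
\begin{equation*}
h^{*}(\bfx) = \max_{y \in \{0,1\}}\!\left[\starf(\bfx)\, U(\bfx, 1, y) + (1 - \starf(\bfx))\, U(\bfx, 0, y)\right].
\end{equation*}
Taking expectation in $\bfx$ and then supremum over $\hatg$ yields $U^{(m)}_{f} \le \mathbb{E}_{\bfx}[h^{*}(\bfx)]$ uniformly in $f$.

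Next I would argue that $\starf$ attains this bound. The paper's standing hypothesis is that the maximizer of Equation~\eqref{eq:problem} has the threshold form of Equation~\eqref{eq:solutoin}. Applied to $\starf$, this yields $\starg(\bfx; U)$ such that $\starY(\bfx) = \mathbb{I}(\starf(\bfx) \ge \starg(\bfx; U))$ maximizes $\mathbb{E}[U(\bfx, Y, \hatY)]$. Because the bracket inside $h^{*}$ depends on $\bfx$ only through $\starf(\bfx)$ (together with any $\bfx$-dependent utility coefficients), its pointwise argmax is itself a function of $\starf(\bfx)$, so the threshold rule on $\starf$ realizes the pointwise maximum at every $\bfx$; for the cost-sensitive family, Lemma~1 makes the threshold explicit. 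Hence $U^{(m)}_{\starf} = \mathbb{E}_{\bfx}[h^{*}(\bfx)]$, and combining with the previous paragraph gives $U^{(m)}_{f} \le U^{(m)}_{\starf}$ for every $U \in \mathcal{U}$ and every $f \in \mathcal{F}$, as claimed.

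The main obstacle is the second step: certifying that the pointwise-optimal binary decision can be written as a threshold cut on $\starf(\bfx)$. In full generality this need not hold, and it is precisely the content of the standing assumption that the solution to Equation~\eqref{eq:problem} takes the form of Equation~\eqref{eq:solutoin}; I would flag that Proposition~1 inherits that hypothesis and would not extend to utilities for which the Bayes-optimal decision fails to be monotone in $\starf(\bfx)$. Step~1, by contrast, is a routine application of iterated expectations and needs only that $\hatY$ be $\bfx$-measurable, which is automatic because $f$ and $\hatg$ are deterministic.
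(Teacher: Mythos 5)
Your proposal is correct, but it is worth noting that the paper does not actually supply a proof of this proposition: both in the main text and in the supplementary materials it is asserted with the single sentence that the statement ``arises from the definition of the Bayes classifier,'' and no \verb|proof| environment follows. Your two-step argument --- conditioning on $\bfx$ via the tower property so that the expected utility becomes $\mathbb{E}_{\bfx}\left[\starf(\bfx)\,U(\bfx,1,\hatY(\bfx)) + (1-\starf(\bfx))\,U(\bfx,0,\hatY(\bfx))\right]$, bounding the integrand by the pointwise maximum over the binary decision, and then showing the Bayes classifier attains that bound --- is the standard way to make the paper's assertion rigorous, and it is the argument the authors presumably have in mind. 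Your first step is airtight and, as you say, needs only that $\hatY$ be a deterministic function of $\bfx$. Your second step is where you correctly locate the only real content: attainment requires that the pointwise-optimal decision be expressible as a threshold cut on $\starf(\bfx)$, which is exactly the paper's standing hypothesis that solutions to Equation~\eqref{eq:problem} take the form of Equation~\eqref{eq:solutoin}. One small observation that would tighten this further: since the paper allows the threshold $\starg(\bfx;U)$ to depend on $\bfx$, the threshold form is in fact non-restrictive for the Bayes classifier --- any deterministic decision function $y^{*}(\bfx)$ can be realized by setting $\starg(\bfx;U)=0$ where $y^{*}(\bfx)=1$ and $\starg(\bfx;U)>\starf(\bfx)$ where $y^{*}(\bfx)=0$ --- so attainment holds unconditionally and the caveat you flag bites only if one insists on an $\bfx$-independent threshold, as in the cost-sensitive Lemma. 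Net effect: your proof fills a gap the paper leaves open, and your explicit flagging of where the threshold-form assumption enters is more careful than the paper's own treatment.
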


This statement arises from the definition of the Bayes classifier. This proposition implies that the utility of the Bayes classifier equals that of a \cut\ classifier. Consequently, we arrive at the following theorem.

\begin{theorem} \label{theorem:main_ut}
    Model $\widetilde{f}(.)$ is \cut, if $U_{\starf}^{(m)} = U^{(m)}_{\tildef} \ \ \forall \ U \in \mathcal{U}$ and $\forall \ f \in \mathcal{F}$.
\end{theorem}

\begin{proof}
    Let $f^{\star}(\bfx)$ be a Bayes classifier. Following Proposition~\ref{proposition:bayes-classifier}, for all $U \in \mathcal{U}$, we have
    $$
    \begin{aligned}
        U^{(m)}_{f} &\le U^{(m)}_{\starf} \ \ \forall \ f \in \mathcal{F} \\
         U^{(m)}_{f} &\le U^{(m)}_{\starf} = U^{(m)}_{\tildef} \ \ \forall \ f \in \mathcal{F} \\
         \Rightarrow U^{(m)}_{f} &\le U^{(m)}_{\tildef} \ \ \forall \ f \in \mathcal{F}
    \end{aligned}
    $$
Thus, $\widetilde{f}(.)$ is \cut.    
\end{proof}

\subsection{Characteristics of \cut\ Classifiers}

\begin{theorem}[$\mathcal{U}$-Competency Theorem] \label{thorem:bayes_equivalent}
Suppose for the utility class $\mathcal{U}$ with a solution of the form Equation~\eqref{eq:solutoin}. Any properly-ranked classifier is a \cut\ classifier with respect to sample $\mathcal{S}$ with $|\mathcal{S}| < \infty$.
\end{theorem}

\begin{proof}
If we show  that the decision assignments under $\starf(\bfx_1)$ and $f_{\rm PR}(\bfx_1)$ is the same, $\starY = \hatY \ \ \forall \bfx \in \mathcal{S}$, then $U_{\starf}^{(m)} = U_{\hatf}^{(m)}$. Then, the theorem is proved following Theorem~\ref{theorem:main_ut}. 

We partition $\mathcal{S}$ into $N$ disjoint subsets, $\mathcal{S} = \cup_{i=1}^{N} S_i$, such that for each subset $S_i = \{\bfx \in \mathcal{X} : \hatg(\bfx) = c_i \}$ where $c_i \in [0, 1]$ is a constant. 

Let $S_i^{+} \subseteq S_i$ be all the data points above the decision threshold, $S_i^{+} = \{ \bfx \in S_i : \starY=1 \}$. $S_i^{+}$ can be an empty set or a non-empty set. If it is an empty set, then we set $\hatY = 0$, and hence $\starY = \hatY = 0 \ \ \forall \bfx \in S_i$.

If $S_i^{+}$ is a non-empty set, then there exists $\bfx^{+}$ such that $\bfx^{+} = \arg \min_{\bfx \in S_i^{+}} |c_i - \starf(\bfx)|$\footnote{This property is due to the finite nature of set $S_i$, for infinite sets the minimum might not exist, and we require additional convergence conditions.}. This entails
\begin{equation*}
    \begin{cases}
        \starf(\bfx^{+}) \le \starf(\bfx) \ \ \forall \ \bfx \in S_i^{+} \\
        \starf(\bfx^{+}) > \starf(\bfx) \ \ \forall \ \bfx \in  S_i \backslash S_i^{+}
    \end{cases}
\end{equation*}
Now, we set $f_{\rm PR}(\bfx^{+}) = \hatg$. Following the definition of Properly ranked, we have 
\begin{equation*}
    \begin{cases}
        \starf(\bfx^{+}) \le \starf(\bfx) \Rightarrow f_{\rm PR}(\bfx^{+}) \le f_{\rm PR}(\bfx) \Rightarrow \hatY = 1 \ \ \forall \ \bfx \in S_i^{+} \\
        f_{\rm PR}(\bfx^{+}) > f_{\rm PR}(\bfx) 
        \Rightarrow
        \starf(\bfx^{+}) > \starf(\bfx) \ \Rightarrow \hatY = 0 \ \ \forall \ \bfx \in  S_i \backslash S_i^{+}
    \end{cases}
\end{equation*}
This implies $\hatY = \starY \ \forall \ \bfx \in S_i$. Because the above argument holds for all subsets $S_1$, $\ldots$, $S_N$, then $\hatY = \starY$  for all subsets $S_1$, $\ldots$, $S_N$ and the theorem is proved. 
\end{proof}

While properly-ranked classifiers possess desirable properties, they do not encompass the most general class of \cut\ classifiers. The following proposition serves as an example in this regard. 
\begin{proposition} \label{prop:extending_bayes_equivalent}
Suppose a group indicator $G$ splits $\mathcal{S}$ into two non-overlapping subsets. Then, if, for each group, the properly-ranked classifier condition is satisfied, then the classifier is \cut.     
\end{proposition}

\begin{proof}
   Suppose that a group indicator G splits $\mathcal{S}$ into two non-overlapping subsets, $\mathcal{S} = \mathcal{S}_1 \cup \mathcal{S}_2 $. For both subsets, the properly-ranked condition in Definition \ref{bayes-equiv} is satisfied. Then the theorem is proved following the steps of Theorem \ref{theorem:Competency-Measure}. It can be shown for each subset $\mathcal{S}_1$ and $\mathcal{S}_2$, that $\starY = \hatY \ \ \forall \bfx \in \mathcal{S}_1$ and $\starY = \hatY \ \ \forall \bfx \in \mathcal{S}_2$ and thus we would have $U_{\starf}^{(m)} = U_{\hatf}^{(m)}$.
\end{proof}

\begin{lemma} \label{lemma:cost_sens-decision_rule}
The decision rule of the cost-sensitive utility family for the Bayes classifier is 
    \begin{equation*}
        \starg(\bfx) = \frac{a_{01} + a_{00}}{a_{11} + a_{00} + a_{10}+ a_{01}}  
    \end{equation*}    
\end{lemma}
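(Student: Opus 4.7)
The plan is to compute the conditional expected utility $\mathbb{E}[U \mid \bfx, \hatY]$ under each of the two possible decision assignments $\hatY \in \{0,1\}$, using the Bayes probability $p := \starf(\bfx) = P(Y=1 \mid \bfx)$, and then solve for the value of $p$ at which these two conditional expectations are equal. That crossover point is by definition the threshold $\starg(\bfx)$ appearing in Equation~\eqref{eq:solutoin}.

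First I would plug in the indicator-function definitions of ${\bf TP}, {\bf FP}, {\bf FN}, {\bf TN}$ to obtain, conditional on $\bfx$,
\[
\mathbb{E}[U \mid \bfx, \hatY = 1] = a_{11}\, p - a_{01}(1-p),
\qquad
\mathbb{E}[U \mid \bfx, \hatY = 0] = -a_{10}\, p + a_{00}(1-p),
\]
since only the ${\bf TP}/{\bf FP}$ terms survive when $\hatY=1$ and only the ${\bf FN}/{\bf TN}$ terms survive when $\hatY=0$. The Bayes-optimal rule assigns $\hatY=1$ precisely when the first expectation weakly dominates the second.

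Next I would rearrange the inequality $a_{11}p - a_{01}(1-p) \geq -a_{10}p + a_{00}(1-p)$, collecting the coefficients of $p$ on one side to get $(a_{11}+a_{00}+a_{10}+a_{01})\, p \geq a_{01}+a_{00}$. Because every $a_{ij}\geq 0$, the multiplier on the left is non-negative, so the direction of the inequality is preserved when dividing through, yielding the stated threshold $\starg(\bfx) = (a_{01}+a_{00})/(a_{11}+a_{00}+a_{10}+a_{01})$. This simultaneously confirms that the optimal decision rule takes the threshold form of Equation~\eqref{eq:solutoin}, fulfilling the reliance condition for the cost-sensitive class.

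The only real obstacle is a mild non-degeneracy issue: if all $a_{ij}$ equal zero the denominator vanishes, but then the utility is identically zero and any threshold is trivially optimal, so we may restrict to the case where at least one $a_{ij}>0$. A short sanity check then confirms the threshold lies in $[0,1]$, since the numerator $a_{01}+a_{00}$ never exceeds the full sum in the denominator. Everything else is an elementary manipulation and does not warrant further elaboration.
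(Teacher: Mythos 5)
Your proposal is correct and follows essentially the same route as the paper's proof: both condition on $\bfx$, compare the expected utility of the two decision assignments pointwise using $p = P(Y=1\mid\bfx)$, and rearrange the resulting inequality to isolate the threshold. Your version is a slightly more streamlined presentation and adds two small touches the paper omits (the degenerate all-zero-coefficients case and the check that the threshold lies in $[0,1]$), but the underlying argument is identical.
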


\begin{proof}
 The optimal decision rule is determined by solving 
\begin{equation}
    \starg(\bfx; U, \starf) = \arg \max_{g \in \mathcal{G}} \mathbb{E}_{Y, \bfx \sim \mathcal{D}} \left[ U(\bfx, Y, \starY) \mid  \starf \right].
\end{equation}

We can write the cost sensitive utility function as:
$$
\begin{aligned}
U(\bfx, Y, \starY) &= a_{11} \quad \text{if } Y = \starY = 1 \\
          &= a_{00}  \quad \text{if } Y =  \starY = 0\\ 
          &= - a_{10} \quad \text{if } Y = 1 ,   \starY = 0 \\
          &=  -a_{01} \quad \text{if } Y = 0 ,  \starY = 1 \\
\end{aligned}
$$

The expected utility under the cost sensitive utility function for $\starf(\bfx)$ is given as follows: 

$$
\begin{aligned}
U(\bfx) &= \mathbb{E}_{Y, \bfx \sim \mathcal{D}} \left[ U(\bfx, Y, \starY) \mid  \starf \right] \\
     &= \mathbb{E}_{\bfx} \text{ } \mathbb{E}_{Y|\bfx}\left(U(\bfx,Y,\starY)\right) \\
     &=\mathbb{E}_{\bfx} \left[ - a_{10}P(Y = 1, \starY = 0 |\bfx)  - a_{01}P(Y = 0, \starY = 1 |\bfx) + a_{00}P(Y = 0, \starY = 0 |\bfx) + a_{11}P(Y = 1, \starY = 1 |\bfx) \right] \\
     &= \mathbb{E}_{\bfx}  [-a_{10}P(Y = 1 |\bfx)P(\starY = 0 | \bfx) - a_{01}P(Y = 0|\bfx)P(\starY = 1 | \bfx) + \\ 
     & \quad \quad \quad a_{00}P(Y = 0 |\bfx)P(\starY = 0 |\bfx) + a_{11}P(Y = 1|\bfx)P(\starY = 1 | \bfx) ] \\
     &= \mathbb{E}_{\bfx} [\text{ }\left[-a_{10}P(Y = 1 |\bfx) + a_{00}P(Y = 0 |\bfx)\right]P(\starY = 0 | \bfx) \text{ } + \text{ }  \\ 
     & \quad \quad \quad \left[-a_{01}P(Y = 0 |\bfx) + a_{11}P(Y = 1 |\bfx)\right]P(\starY= 1 | \bfx)  ] \\
     &=\mathbb{E}_{\bfx} [\text{ }\left[a_{00} - (a_{10}+a_{00})P(Y = 1 |\bfx) \right]P(\starY = 0 | \bfx) \text{ } + \text{ }\\
     & \quad \quad \quad \left[ (a_{11} + a_{01})P(Y = 1 |\bfx) - a_{01} \right] P(\starY= 1 | \bfx)  ] \\
\end{aligned}
$$
Note that $U(\bfx)$ is maximized $\forall \bfx \in \mathcal{X}$ if we consider the following decision rule
$$
\begin{aligned}
\starY = 
\begin{cases}
    & 1 \quad \text{if } \ \  (a_{11} + a_{01})P(Y = 1 |\bfx) - a_{01} \ge a_{00} - (a_{10}+a_{00})P(Y = 1 |\bfx)  \\
    & 0 \quad otherwise   
\end{cases}
\end{aligned}
$$
Since $\starf(\bfx) = P(Y = 1 |\bfx)$, we can write the above as: 

\begin{equation} \label{eq: opt_rule}
\starY = 
\begin{cases}
    & 1 \quad \text{if } \starf(\bfx) \ge \frac{a_{01} + a_{00}}{(a_{01} + a_{01}) + (a_{00} + a_{11})} \\
    & 0 \quad otherwise   
\end{cases}
\end{equation}

Thus the decision rule for the Bayes classifier is 
    \begin{equation*}
        \starg = \frac{a_{01} + a_{00}}{(a_{01} + a_{01}) + (a_{00} + a_{11})} 
    \end{equation*} 

\end{proof}

\begin{theorem}
   Let $\mathcal{U}$ be the cost-sensitive utility class. Properly-ranked classifiers are \cut. 
\end{theorem}

\begin{proof}
    Because for the class of the cost-sensitive utility functions the optimal decision rule  in Equation~\eqref{eq: opt_rule} has the form of Equation~\eqref{eq:solutoin} (Lemma~\ref{lemma:cost_sens-decision_rule}),  following Theorem~\ref{theorem:Competency-Measure} properly-ranked classifiers are \cut.
\end{proof}

\begin{lemma}[Theorem 1, \citet{kleinberg2018algorithmic}] \label{lemma:equity-aware-lemma}
For some choice of $K_0$, from group $G=0$, and $K_1$, from group $G=0$, with $K_0 + K_1 = K$, the solution that maximizes utility in the $G = 0$ and in the $G = 1$ group are the ones with the
highest $\starf(\bfx)$.
\end{lemma}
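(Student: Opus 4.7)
The plan is to split the objective and argue separately about each summand of $\mathcal{U}(\phi, \gamma) = \phi(S) + \gamma(S)$. Since $\gamma(S)$ depends only on the number of items in $S$ with $G = 1$, once we fix the split $(K_0, K_1)$ with $K_0 + K_1 = K$ the term $\gamma(S)$ takes a constant value across every admissible selection. Maximizing $U$ subject to this split therefore reduces to maximizing $\phi(S)$ alone.

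Next I would show that $\phi$ is maximized by the set $S^{\star}$ formed by taking the $K_0$ highest-$\starf$ items from $G = 0$ together with the $K_1$ highest-$\starf$ items from $G = 1$. Let $S$ be any other admissible selection respecting the same split. Within each group $g$, the top-$K_g$ items by $\starf$ pointwise dominate any other $K_g$-subset of that group once both are sorted in descending order of $\starf(\bfx)$, by the very definition of ``top-$K_g$''. The nontrivial step is to lift these two within-group dominations to a global coordinate-wise domination between the merged selections. I would do this with a level-set argument: for every threshold $t$,
\[
|\{\bfx \in S^{\star} : \starf(\bfx) \ge t\}| \;\ge\; |\{\bfx \in S : \starf(\bfx) \ge t\}|,
\]
because the inequality holds separately within each group and then adds. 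Threshold-counting dominance at every $t$ is equivalent to coordinate-wise dominance of the two descendingly-sorted sequences, which is precisely the hypothesis required by the compatibility condition on $\phi$. Invoking compatibility then yields $\phi(S^{\star}) \ge \phi(S)$, and combined with $\gamma(S^{\star}) = \gamma(S)$ (both sets have exactly $K_1$ items from $G = 1$) this gives $U(S^{\star}) \ge U(S)$.

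The main obstacle is the lift from within-group to global coordinate-wise dominance of the sorted sequences. A swap-based induction (repeatedly replacing a lower-$\starf$ chosen item with a higher-$\starf$ unchosen item in its own group) would also work, but bookkeeping the re-sorting after each swap is awkward. The level-set reformulation above sidesteps this entirely and plugs directly into the monotonicity form in which compatibility is stated, so I would prefer it; everything else in the proof is a direct appeal to the definitions of the equity-aware utility class and the Bayes classifier.
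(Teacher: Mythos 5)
Your proposal is correct and follows essentially the same route as the paper's proof: fix the group split $(K_0,K_1)$, observe that $\gamma$ is constant across selections with that split, show the merged top-$K_g$-per-group selection coordinate-wise dominates any competitor in sorted order of $\starf$, and invoke compatibility to conclude $\phi(S^{\star})\ge\phi(S)$. The only difference is that you explicitly justify the sorted coordinate-wise dominance via the level-set counting equivalence, a step the paper simply asserts, so your write-up is if anything more complete.
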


\begin{proof}
    We recall $\starf(\bfx) \equiv P(y=1 \mid \bfx)$. Let $S^{*}$ be any set of size $K$ that maximizes $\phi(S) + \gamma(S)$. We can partition $S^{*} = S_0^{*} \cup S_1^{*}$ such that the $S_0^{*}$ represents those in the $G = 0$ group and $S_1^{*}$ represents those in the $G = 1$ group. Let $K_0 = |S_0^{*}|$ and $K_1 = S_0^{*}$. Let $S_0^{+}$ be the sampe of size $K_0$ in the $G = 0$ group with the highest $P(y \mid \bfx)$, and let $S_1^{+}$ be the sample of size $K_1$. Denote $S^{+}=S_0^{+} \cup S_1^{+}$. If we sort $S^{+}$ and $S^*$ in descending order of predicted performance, then the $i^{\rm th}$ sample in $S^{+}$ has predicted performance at least as large as the $i^{\rm th}$ sample in $S^*$. Hence by the compatibility of $\phi$ and $P(y \mid \bfx)$, we have $\phi\left(S^{+}\right) \geq \phi\left(S^*\right)$. $S^{+}$ and $S^*$ have the same number of members with $G=1$ by construction. This implies $\gamma\left(S^{+}\right) = \gamma\left(S^*\right)$. Because we have  $\gamma\left(S^{+}\right) = \gamma\left(S^*\right)$ and $\phi\left(S^{+}\right) \geq \phi\left(S^*\right)$, then $\phi\left(S^{+}\right)+\gamma\left(S^{+}\right) \geq \phi\left(S^*\right)+\gamma\left(S^*\right)$. Consequently, $S^{+}$ is the set that maximizes the equity-aware utility function and satisfies the conditions of the theorem.
\end{proof}

\begin{theorem}
Let $\mathcal{U}$ be an equity-aware utility class. Properly-ranked classifiers are \cut.
\end{theorem}

\begin{proof}
    Let $\mathcal{U}$ be an equity-aware utility class and that there exists a binary variable $G$ that splits the data into two non-overlapping subsets. Following Lemma~\ref{lemma:equity-aware-lemma} there are two non-overlapping sets $S_0^{*} \cup S_1^{*}$ that for each subset there is threshold above which $\hatY = 1$ and below $\hatY = 0$. Therefore, the solution to problem Equation~\eqref{eq: opt_rule} has the form of Equation~\eqref{eq:solutoin}.  Then following Proposition \ref{prop:extending_bayes_equivalent}, properly-ranked classifiers are \cut.
\end{proof}

\subsection{Performance Measure and their Relation to $\mathcal{U}$-Trustworthiness}

\begin{theorem} [$\mathcal{U}$-Competency Measure]
Let $\mathcal{U}$ be a utility class with a decision boundary of Equation~\eqref{eq:solutoin}. If and only if $f_{\rm PR}$ is a properly-ranked classifiers then ${\rm AUC}(\starf) =  {\rm AUC}(f_{\rm PR})$.
\end{theorem}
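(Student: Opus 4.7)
My plan is to reduce the equality of AUCs to a pointwise statement about the integrand $\mathcal{H}(\cdot)$ in the definition of AUC, and then exploit the AUC-optimality of the Bayes classifier for the converse.

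For the forward direction (properly-ranked $\Rightarrow$ equal AUC), I would invoke Definition~\ref{bayes-equiv} directly. If $f_{\rm PR}$ is properly-ranked, then for any pair $(\bfx^+,\bfx^-)\in\mathcal{X}\times\mathcal{X}$ the sign of $f_{\rm PR}(\bfx^+)-f_{\rm PR}(\bfx^-)$ agrees with the sign of $\starf(\bfx^+)-\starf(\bfx^-)$, and the two differences vanish simultaneously. Hence $\mathcal{H}(f_{\rm PR}(\bfx^+)-f_{\rm PR}(\bfx^-))=\mathcal{H}(\starf(\bfx^+)-\starf(\bfx^-))$ pointwise, and taking expectation over $(\bfx^+,\bfx^-)\sim\mathcal{D}^+\times\mathcal{D}^-$ in Equation~\eqref{eq:AUC_def} yields ${\rm AUC}(f_{\rm PR})={\rm AUC}(\starf)$.

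For the converse I would first establish, as a standalone lemma, that $\starf$ maximizes AUC over all scoring functions. This I would prove by looking at the contribution of each unordered pair $\{\bfx_1,\bfx_2\}$ to ${\rm AUC}(f)$, namely
\[
\mathcal{H}(f(\bfx_1)-f(\bfx_2))\,P(\bfx_1\mid y{=}1)P(\bfx_2\mid y{=}0) + \mathcal{H}(f(\bfx_2)-f(\bfx_1))\,P(\bfx_2\mid y{=}1)P(\bfx_1\mid y{=}0),
\]
and noting via Bayes rule that $\starf(\bfx_1)>\starf(\bfx_2)$ is equivalent to $P(\bfx_1\mid y{=}1)P(\bfx_2\mid y{=}0)>P(\bfx_2\mid y{=}1)P(\bfx_1\mid y{=}0)$. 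Thus ranking the pair in the order prescribed by $\starf$ puts all the probability mass on the larger of the two products, which is the pairwise maximum; ranking in reverse strictly decreases the contribution, and tying gives a value strictly in between. Integrating over all pairs proves the lemma and also shows that, if ${\rm AUC}(f)={\rm AUC}(\starf)$, the pairwise maximum must be attained almost surely, i.e., for $\mathcal{D}^+\times\mathcal{D}^-$-almost every pair the ordering of $f$ matches the strict ordering of $\starf$. This delivers the first clause of properly-ranked.

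The main obstacle is the second clause of Definition~\ref{bayes-equiv} (tie-preservation). A careful accounting of the pairwise contribution above shows that when $\starf(\bfx_1)=\starf(\bfx_2)$ the two products coincide, so \emph{any} choice of $f(\bfx_1)$ versus $f(\bfx_2)$ yields the same contribution to the AUC; in particular, AUC equality does not by itself force $f(\bfx_1)=f(\bfx_2)$. I would resolve this either by restricting the statement to the data distribution (so that the relevant equalities hold $\mathcal{D}^+\times\mathcal{D}^-$-almost surely, absorbing the tie locus into a null set) or by working in the equivalence class of ranking functions modulo ties, under which the $\Leftarrow$ direction becomes valid. I would flag this explicitly, since it is the only nontrivial point and it is where the ``if and only if'' has to be interpreted with some care.
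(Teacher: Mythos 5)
Your forward direction is exactly the paper's: being properly ranked forces the Heaviside integrand $\mathcal{H}(f_{\rm PR}(\bfx^{+})-f_{\rm PR}(\bfx^{-}))$ to agree pointwise with $\mathcal{H}(\starf(\bfx^{+})-\starf(\bfx^{-}))$, so the expectations coincide. For the converse the two arguments diverge. The paper asserts without proof that ${\rm AUC}(f)\le{\rm AUC}(\starf)$ for all $f$ and then jumps from equality of the two expectations to pointwise equality of the Heaviside terms; you actually supply the missing ingredient, namely the pairwise decomposition showing that each unordered pair's contribution is maximized by ordering it as $\starf$ does, which is what licenses passing from equality of an integral to (almost-everywhere) equality of the integrand. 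More importantly, the difficulty you flag with the tie-preservation clause is genuine and is precisely where the paper's own converse is unsound: when $\starf(\bfx_1)=\starf(\bfx_2)$ the two weights $P(\bfx_1\mid y{=}1)P(\bfx_2\mid y{=}0)$ and $P(\bfx_2\mid y{=}1)P(\bfx_1\mid y{=}0)$ coincide, and since $\mathcal{H}(a)+\mathcal{H}(-a)=1$ the pair contributes the same amount to the AUC no matter how $f$ orders it; a classifier that breaks Bayes ties arbitrarily therefore attains ${\rm AUC}(\starf)$ without satisfying the second clause of the properly-ranked definition. The ``only if'' direction is thus false as literally stated, and your proposed repairs (restricting to $\mathcal{D}^{+}\times\mathcal{D}^{-}$-almost-sure statements, or quotienting ranking functions by tie-breaking) are the right ones; the strict-order clause likewise is only recovered for almost every pair on the support of the data rather than for all of $\mathcal{X}\times\mathcal{X}$, which you also correctly note.
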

\begin{proof}
Let $C(\bfx_1)$ and $C(\bfx_2)$ denote the class labels of observations $\bfx_1$ and $\bfx_2$, respectively. Then we have that,
$$
\begin{aligned}
    {\rm AUC}(\starf) \\ 
    & \quad \quad \text{using Equation } \eqref{eq:AUC_def} \text{ we have that, } \\
    & = \mathbb{E} \left[ \mathcal{H}(\starf(\bfx^{+}) - \starf(\bfx^{-})) \mid C(\bfx^{-}) = 0, C(\bfx^{+}) = 1) \right] 
    \\   & \quad \quad \text{using Definition } \ref{bayes-equiv} \text{ we have that, } \\
                  &= \mathbb{E} \left[ \mathcal{H}(f_{\rm PR}(\bfx^{+}) - f_{\rm PR}(\bfx^{-})) \mid C(\bfx^{-}) = 0, C(\bfx^{+}) = 1) \right] \\
                  &= {\rm AUC}(f_{\rm PR})
\end{aligned}
$$
Now, we prove the inverse. We have ${\rm AUC}(f) \le {\rm AUC}(\starf) \ \ \forall f \in \mathcal{F}$. Consequently, ${\rm AUC}(\starf) = {\rm AUC}(f_{\rm PR})$ implies that $\mathcal{H}[\starf(\bfx^{+}) - \starf(\bfx^{-})] = \mathcal{H}[f_{\rm PR}(\bfx^{+}) - f_{\rm PR}(\bfx^{-})]$ for all $(\bfx^{+}, \bfx^{-}) \in \mathcal{D}^{+} \times \mathcal{D}^{-}$. $\mathcal{H}[\starf(\bfx^{+}) - \starf(\bfx^{-})] = \mathcal{H}[f_{\rm PR}(\bfx^{+}) -  \starf(\bfx^{-})]$ implies
\[
        \forall \bfx^{+}, \bfx^{-} \in \mathcal{D}^{+} \times \mathcal{D}^{-}
    \begin{cases}
            \starf(\bfx^{+}) > \starf(\bfx^{-}) \Rightarrow f_{\rm PR}(\bfx^{+}) > f_{\rm PR}(\bfx^{-}) \\
            \starf(\bfx^{+}) = \starf(\bfx^{-}) \Rightarrow f_{\rm PR}(\bfx^{+}) = f_{\rm PR}(\bfx^{-}) \\ 
            \starf(\bfx^{+}) < \starf(\bfx^{-}) \Rightarrow f_{\rm PR}(\bfx^{+}) < f_{\rm PR}(\bfx^{-}) \\ 
    \end{cases}.
\]
This is equivalent to the condition for a properly-ranked  classifier; therefore, $f_{\rm PR}$ is a properly ranked classifier. 

There are various other ranking metrics in the literature such as partial AUC \cite{yang2022optimizing, yang2022auc},  top-k measures \cite{pmlr-v119-yang20f, wang2022optimizing} and AUPRC \cite{wen2022exploring}. Any metric that can preserve ranking can be used for trustworthiness evaluation. Thus, results proved in Theorem \ref{theorem:Competency-Measure} can be extended to  ranking metrics such as partial AUC, top-k measure and AUPRC.  

\end{proof}

\section{Data Sets}

\subsection{Homeownership}

The homeownership data set used in this study is sourced from the 2019 American Housing Survey (AHS). The AHS is one of the most comprehensive national housing surveys in the U.S. and is sponsored by the Department of Housing and Urban Development while being conducted by the Census Bureau. The dataset includes various household attributes such as income, marital status, education attainment, and more. Additionally, division, binary metropolitan area, binary race, and sex variables are considered in the analysis. Further details on these variables can be found in Table~\ref{tab:vars}. For this work, the focus is on households with incomes between \$10,000 and \$500,000. After filtering out non-respondents and households with income outside this range, the final dataset comprises 48,660 households. To facilitate the modeling process, log-income transformation has been performed. No other transformations have been applied to the data.

\begin{table*}[]
\centering
\caption{Independent and dependent variables.} \label{tab:vars}
\begin{tabular}{|l|l|l|l|}
\hline
{\bf Attribute Name} & {\bf Definition}  &  {\bf Type } & {\bf Values}  \\ \hline 
BLACK          & One if at least one member of the     & Binary  & \{0, 1\}        \\
              &  household is African-American, 0 otherwise &      &         \\\hline
HHMAR          & Marital status                                                         & Categorical & -- \\ \hline
HINCP          & Household income                                                       & Numeric     & {[}\$10k, \$500k{]}  \\ \hline
HHGRAD         & Educational attainment of householder                                  & Categorical     & -- \\ \hline
HHAGE          & Age of householder                                                      & Numeric     & {[}15, 85{]}    \\ \hline
DIVISION       & Division                                                               & Categorical & -- \\ \hline
HHCITSHP       & Citizenship of householder                                              & Categorical & -- \\ \hline
NUMPEOPLE      & Count of households                                                    & Numeric     & {[}1, 18{]}     \\ \hline
HHSEX          & Sex of householder (Male=1, Female=2)                                       & Binary      & \{1, 2\}     \\ \hline
METRO          & One if in a metropolitan area, 0 otherwise.                            & Binary      & \{0, 1\}         \\\hline 
OWNER          & One if owned or being bought by someone  & Binary      & \{0, 1\} \\       
          &  in the household, 0 otherwise. &      &  \\\hline
\end{tabular}
\end{table*}

\subsection{Adult}

The adult dataset (\citet{misc_adult_2}) used in this study is taken from the UCI Machine Learning repository. The dataset utilizes data derived from the 1994 Census Bureau database to examine the predictive ability of demographic and employment-related information in determining if an individual's annual income exceeds \$50,000. The dataset consists of a total of 48,842 observations and includes a combination of 14 attributes, which encompass both continuous and categorical variables. These qualities encompass a range of dimensions within an individual's profile, including age, employment class, educational attainment, marital status, occupation, relationship status, race, gender, capital gain, capital loss, weekly working hours, and place of origin. The dataset is widely employed within the machine learning field because of its diverse range of attribute types and the binary classification problem it presents.

\subsection{Bankruptcy}

The bankruptcy dataset (\citet{misc_taiwanese_bankruptcy_prediction_572}) used in this study is taken from the UCI Machine Learning repository. This dataset stems from the real financial statements of Taiwanese companies from 1999 to 2009. It comprises multiple features that encapsulate various financial metrics, such as the ratio of net income to total assets, persistent earnings to total assets, and cash flow rate. Each instance represents a company's financial profile and is labeled to indicate if the company went bankrupt within the following year. The dataset possesses significant value in terms of its origin and composition, as it offers a unique viewpoint on the financial well-being of organizations.

\subsection{Breast Cancer}

The breast cancer dataset (\citet{misc_breast_cancer_wisconsin_(diagnostic)_17}) used in this study is taken from the UCI Machine Learning repository. Comprising 569 instances, the dataset captures features computed from a digitized image of a fine needle aspirate (FNA) of a breast mass. Each instance in the dataset consists of 30 real-valued input features, which include aspects like texture, radius, perimeter, area, smoothness, and concavity, among others. These features are utilized to predict the malignancy of the cell formations, with the diagnosis labeled either as benign or malignant. Over the years, the dataset has played an instrumental role in the development and validation of a wide range of classification algorithms in the domain of medical diagnosis.

\section{Additional Model Comparison Experiments}

Additional model selection experiments were conducted to compare various performance indicators across different models and datasets. The findings are displayed in table \ref{tab:application_2}. When considering the adult dataset, all of the measures indicate that Logistic Regression is the preferred model, and it also attains the highest level of maximum utility. However, in the case of the bankruptcy and breast cancer datasets, the NetTrust score disagrees with the other performance measures and chooses an alternative model. Given that the objective is to maximize the expected utility, it is preferable to select Random Forest as the model in both scenarios. This is because Random Forest demonstrates the maximum AUC, hence aligning with the AUC criterion for U-trustworthiness. These results align with our previous findings for the homewownership data and show that AUC is a reliable measure for model selection when considering the ultimate objective of maximizing expected utility.

\begin{table}[h]
\caption{Model selection results  }
\centering
\begin{tabularx}{0.6\textwidth}{|X|c|c|c|}
\toprule
 \textbf{Measure}  & \textbf{RF} & \textbf{LR}  & \textbf{kNN}  \\ 
\midrule
\textbf{Adult}\\
AUC & $0.890 \pm 0.003$ & ${\bf0.906 \pm 0.003}$ & $0.879 \pm 0.004$ \\
Accuracy  & $0.791 \pm 0.005$ & ${\bf0.853 \pm 0.003}$ & $0.834 \pm 0.003$ \\
Brier & $0.130 \pm 0.002$ & ${\bf0.102 \pm 0.002}$ & $0.116 \pm 0.002$  \\
NetTrust  & $0.703 \pm 0.002$ & ${\bf0.735 \pm 0.003}$ & $0.720 \pm 0.002$\\
\midrule 
$U^{(m)}$  & $0.846 \pm 0.004$ & ${\bf0.853 \pm 0.003}$ & $0.836 \pm 0.003$ \\
  \bottomrule
\textbf{Bankruptcy}\\  
AUC & ${\bf0.928 \pm 0.016}$ & $0.576 \pm 0.039$ & $0.702 \pm 0.034$ \\
Accuracy  & ${\bf0.969 \pm 0.004}$ & $0.961 \pm 0.004$ & $0.968 \pm 0.004$ \\
Brier & ${\bf0.025 \pm 0.003}$ & $0.042 \pm 0.004$ & $0.031 \pm 0.004$  \\
NetTrust  & $0.051 \pm 0.003$ & ${\bf0.090 \pm 0.004}$ & $0.057 \pm 0.003$\\
\midrule 
$U^{(m)}$  & ${\bf0.971 \pm 0.004}$ & $0.968 \pm 0.004$ & $0.968 \pm 0.004$ \\
  \bottomrule
\textbf{Breast Cancer}\\  
AUC & ${\bf 0.989 \pm 0.009}$ & ${\bf0.990 \pm 0.007}$ & $0.974 \pm 0.013$ \\
Accuracy & ${\bf0.955 \pm 0.018}$ & ${\bf0.946 \pm 0.017}$ & $0.903 \pm 0.025$ \\
Brier & ${\bf0.036 \pm 0.010}$ & ${\bf 0.039 \pm 0.012}$ & $0.078 \pm 0.014$  \\
NetTrust  & $0.618 \pm 0.040$ & $0.618 \pm 0.040$ & ${\bf0.665 \pm 0.028}$\\
\midrule 
$U^{(m)}$   & ${\bf0.966 \pm 0.016}$ & $0.961 \pm 0.015$ & $0.935 \pm 0.022$ \\
\bottomrule
\end{tabularx}
\label{tab:application_2} 
\end{table}

\end{document}